\newcommand{\red}[1]{{#1}}
\def\1{\bm{1}}
\DeclareMathAlphabet{\mathsfit}{\encodingdefault}{\sfdefault}{m}{sl}
\SetMathAlphabet{\mathsfit}{bold}{\encodingdefault}{\sfdefault}{bx}{n}
\newcommand{\E}{\mathbb{E}}
\newcommand{\R}{\mathbb{R}}
\newcommand{\pa}{\mathrm{pa}}
\newcommand{\enc}{\mathsf{Enc}}
\newcommand{\dec}{\mathsf{Dec}}
\newcommand{\doo}{\mathrm{do}}
\newcommand{\cf}{\mathrm{CF}}
\newcommand{\fact}{\mathrm{F}}
\newcommand{\ep}{\varepsilon}
\newcommand{\cG}{\mathcal{G}}
\newcommand{\cI}{\mathcal{I}}
\newcommand{\cM}{\mathcal{M}}
\newcommand{\cN}{\mathcal{N}}
\newcommand{\cU}{\mathcal{U}}
\newcommand{\cX}{\mathcal{X}}
\newcommand{\cY}{\mathcal{Y}}
\newcommand{\cZ}{\mathcal{Z}}
\def\var{{\textrm{Var}}\,}
\newcommand{\simiid}{\overset{\mathrm{iid}}{\sim} }
\newcommand{\indep}{\perp \!\!\! \perp}
\DeclarePairedDelimiterX{\infdivx}[2]{(}{)}{%
  #1\;\delimsize\|\;#2%
}
\newtheorem{lemma}{Lemma}
\newcommand{\y}{\textit{Y}}
\newcommand{\tri}{{\textit{triangle}}}
\newcommand{\chain}{{\textit{chain}}}
\newcommand{\dia}{{\textit{diamond}}}
\tikzset{
    -Latex, auto, node distance = 0.5 cm and 0.5 cm, semithick,
    state/.style = {circle, draw, minimum width = 0.7 cm},
    const/.style = {minimum width = 0.7 cm},
    inter/.style = {rectangle, draw, minimum width = 0.7 cm, minimum height = 0.7 cm},
    point/.style = {circle, draw, inner sep = 0.04cm, fill, node contents = {}},
    bidirected/.style = {Latex-Latex,dashed},
    el/.style = {inner sep=2pt, align=left, sloped}
}
\newenvironment{CompactItemize}{
\begin{list}{$\bullet$}{%
\usecounter{enumi}
\setlength{\leftmargin}{12pt}
\setlength{\itemindent}{5pt}
\setlength{\topsep}{2pt}
\setlength{\itemsep}{1pt}
}}
{\end{list}}
\newcommand{\mmd}{\text{MMD}} 
\newcommand{\mse}{\text{MSE}} 
\newcommand{\DCM}{\text{DCM}\xspace}
\newenvironment{CompactEnumerate}{
\begin{list}{\arabic{enumi}.}{%
\usecounter{enumi}
\setlength{\leftmargin}{12pt}
\setlength{\itemindent}{5pt}
\setlength{\topsep}{2pt}
\setlength{\itemsep}{1pt}
}}
{\end{list}}
\begin{document}

\title{Modeling Causal Mechanisms with Diffusion Models\\ for Interventional and Counterfactual Queries}
\author{
    Patrick Chao\thanks{Work done during internship at Amazon.}\\
    University of Pennsylvania\\
    \texttt{pchao@wharton.upenn.edu} 
    \and
     Patrick Bl\"obaum\\
    Amazon\\
    \texttt{bloebp@amazon.com}
    \and
    Sapan Patel \\
    Amazon\\
    \texttt{sapanp@amazon.com} \and
    Shiva Prasad Kasiviswanathan\\
    Amazon\\
    \texttt{kasivisw@gmail.com}
}
\date{}
\maketitle

\newcommand{\fix}{\marginpar{FIX}}
\newcommand{\new}{\marginpar{NEW}}

\begin{abstract}
We consider the problem of answering observational, interventional, and counterfactual queries in a causally sufficient setting where only observational data and the causal graph are available. Utilizing the recent developments in diffusion models, we introduce diffusion-based causal models (\DCM) to learn causal mechanisms, that generate unique latent encodings.  These encodings enable us to directly sample under interventions and perform abduction for counterfactuals. Diffusion models are a natural fit here, since they can encode each node to a latent representation that acts as a proxy for exogenous noise. Our empirical evaluations demonstrate significant improvements over existing state-of-the-art methods for answering causal queries. Furthermore, we provide theoretical results that offer a methodology for analyzing counterfactual estimation in general encoder-decoder models, which could be useful in settings beyond our proposed approach.
\end{abstract}

\maketitle

\begin{abstract}

\end{abstract}

\section{Introduction}

Understanding the causal relationships in complex problems is crucial for making analyses, conclusions, and generalized predictions. To achieve this, we require causal models and queries. Structural Causal Models (SCMs)
are generative models describing the causal relationships between variables, allowing for observational, interventional, and counterfactual queries \citep{Pearl}. An SCM specifies how a set of endogenous (observed) random variables is generated from a set of exogenous (unobserved) random variables with prior distribution via a set of structural equations.

\red{Given a causal DAG, we focus on approximating the individual SCMs with diffusion models.} As an application, we present a flexible framework for answering all the three types of causal (observational, interventional, and counterfactual) queries. \red{For counterfactuals, we work in Pearl's SCM framework~\citep{Pearl}, and seek to quantify unit-level statements of the form: Given that observed a factual sample $(x_1^\fact,\dots,x_K^\fact)$ for a set of $K$ variables $(X_1,\dots,X_k)$, what would have been the outcome for these $K$ variables be, if the value of some set $X_\cI$ (with $I \subseteq [K])$ had been set to some $\gamma \in \R^{|\cI|}$? Throughout this paper we assume {\em causal sufficiency}, i.e., absence of hidden confounders. Note that causal sufficiency is a necessary  assumption for answering causal queries from observational data alone.}

In the SCM framework, causal queries can be answered by learning a proxy for the unobserved exogenous noise and the structural equations. \red{This suggests that (conditional) \textit{generative models} that encode to a latent space could be an attractive choice for the modeling SCMs, as the latent serves as the proxy for the exogenous noise.} In these models, the encoding process extracts the latent from an observation, and the decoding process generates the sample from the latent, approximating the structural equations. 


\noindent\textbf{Our Contributions.} In this work, we propose and analyze the effectiveness of using a diffusion model for modeling SCMs. Diffusion models~\citep{sohl-dickstein, ddpm, ddim} have gained popularity recently due to their high expressivity and exceptional performance in generative tasks~\citep{imagen, dalle2, diffwave}. The primary contribution of our work is in determining how to apply diffusion models to the causal setting. \red{Rather than focusing a single causal inference setting, our aim through this modeling is to provide a single flexible framework that works for answering a wide range of causal queries. Now applying diffusion models to the causal setting is non-trivial because diffusion models are typically used to learn a stochastic process mapping between a standard Gaussian and a data distribution.} Our main idea is to model each node in the causal graph as a diffusion model and cascade generated samples in topological order to answer causal queries. For each node, the corresponding diffusion model takes as input the node and parent values to encode and decode a latent representation. \red{To implement the diffusion model, we utilize the recently proposed \textit{Denoising Diffusion Implicit Models} (DDIMs)~\citep{ddim}, which may be interpreted as a deterministic autoencoder model without any dimensionality reduction while encoding.} We leverage the deterministic forward and backward diffusion processes of DDIMs to use diffusion models as an encoder-decoder model. We refer to the resulting model as {\em diffusion-based causal model} (\DCM) and show that this model mimics the necessary properties of an SCM. Our key contributions include:
\begin{list}{{\bf (\arabic{enumi})}}{\usecounter{enumi}
\setlength{\leftmargin}{3ex}
\setlength{\listparindent}{0ex}
\setlength{\parsep}{0pt}}
\vspace*{-1ex}
    \item {[Section~\ref{sec: DCM}]} We propose diffusion-based causal model (\DCM), a new model class for modeling structural causal models, that provides a flexible and practical framework for approximating both interventions
(do-operator) and counterfactuals (abduction-action-prediction steps). We present a procedure for training a \DCM given just the causal graph and observational data, and show that the resulting trained model enables sampling from the observational and interventional distributions, and facilitates answering counterfactual queries.
    \item {[Section~\ref{sec: theory}]} Our theoretical analysis examines the accuracy of counterfactual estimates generated by the \DCM, and we demonstrate that they can be bounded given some reasonable assumptions. Importantly, our analysis is not limited to diffusion models, but also applies to other encoder-decoder settings. \red{To the best of our knowledge, these are the first error bounds that explain the observed performance improvements in using encoder-decoder models, like diffusion models, to address counterfactual queries.} Another feature of this result, is that it also extends, under an additional assumption, to the more challenging multivariate case. 

    \item {[Section~\ref{sec: exp}]}  We evaluate the performance of \DCM on a range of synthetic datasets generated with various structural equation types for all three forms of causal queries. 
    We find that \DCM consistently outperforms existing state-of-the-art methods \citep{vaca,carefl}. In fact, for certain interventional and counterfactual queries such as those arising with {\em nonadditive noise} models, \DCM is better by an order of magnitude or more than these existing approaches. Additionally, we demonstrate the favorable performance of \DCM on an interventional query experiment conducted on fMRI data.
\end{list}

\noindent\textbf{Related Work.} 
Over the years, a variety of methods have been developed in the causal inference literature for answering interventional and/or counterfactual queries including non-parametric  methods~\citep{shalit2017estimating,alaa2017bayesian,muandet2021counterfactual} and probabilistic modeling methods~\citep{zevcevic2021interventional}.
More relevant to our approach is a recent series of work, including~\citep{moraffah2020can,pawlowski2020deep,kocaoglu2018causalgan,parafita2020causal,zevcevic2021relating,garrido2021estimating,karimi2020algorithmic,vaca,carefl,diffscm} that have demonstrated the success of using deep (conditional) generative models for this task. 

\citet{karimi2020algorithmic} propose an approach for answering interventional queries by fitting a \textit{conditional variational autoencoder}  to each
conditional in the Markov factorization implied by the causal graph. Also using the ideas of variational inference and normalizing flows,~\citet{pawlowski2020deep} propose schemes for counterfactual inference.  

In \citet{carefl}, the authors propose an autoregressive normalizing flow for causal discovery and queries, referred to as \textit{CAREFL}.  CAREFL is also applicable even with only knowledge of the causal ordering rather than the full causal graph as we require. However, as also noted by~\citet{vaca}, when the causal graph is present, CAREFL is unable to
exploit the absence of edges fully as it reduces a causal graph to its causal ordering (which may not be unique). Using normalizing flows for answering causal queries with causal DAG was also explored by~\citep{balgi2022personalized}, however their approach does not have any theoretical guarantees on their counterfactual estimates, and their experimental evaluation is quite limited. 

\citet{vaca} propose \textit{VACA}, which uses graph neural networks (GNNs) in the form of a {\em variational graph autoencoder} to sample from the observational, interventional, and counterfactual distribution. VACA can utilize the inherent graph structure through the GNN, however, suffers in empirical performance (see~Section~\ref{sec: exp}). Furthermore, the usage of the GNN leads to undesirable design constraints, e.g., the encoder GNN cannot have hidden layers~\citep{vaca}. 

A very recent work by \citet{javaloy2023causal} combines the idea of using autoregressive normalizing flows (as in CAREFL) and modeling the entire causal graph as one model (as in VACA), to reduce possible error propagation in the graph. The authors further generalize the theoretical results from \citet{carefl} beyond affine autoregressive normalizing flows and establish a clearer connection to SCMs by providing a more direct way of applying the do-operator. \red{In contrast to our work, \citet{javaloy2023causal} focuses on modeling the whole causal graph as one model.} Modeling on a per-node basis has several advantages over a single model in terms of flexibility and computational efficiency because it permits individual node models to be trained in parallel. 
\red{For example, on a single experiment, we observed that our DCM approach trains roughly seven times faster than CAREFL and nine times faster than VACA (see \cref{sec: running time}).} 


\citet{diffscm} use diffusion models for counterfactual estimation, focusing on the bivariate graph case with an image class causing an image. The authors train a diffusion model to generate images and use the abduction-action-prediction procedure from \citet{pearl2016} as well as classifier guidance \citep{classifier_guidance} to generate counterfactual images. However, this is solely for bivariate models and requires training a separate classifier for intermediate diffusion images, and exhibits poor performance for more complex images e.g., ImageNet \citep{imagenet}. Our approach distinguishes itself from~\citet{diffscm} as it can handle general causal graphs (beyond the simpler two node setting) and operates on continuous variables (beyond the simpler case of a discrete label and image). Our experimental evaluations are more general as it also covers  interventional queries not addressed by~\citet{diffscm}. Finally, in terms of theoretical contribution, we provide rigorous conditions on the latent model and structural equations under which we can estimate counterfactuals. Even for two variable setting considered by~\citet{diffscm} such a theoretical understanding was previously missing.



Finally, a diffusion model based approach has also been recently proposed for the different task of causal discovery under additive noise models, where the diffusion model serves as an approximation to the Hessian functions~\citep{sanchez2022diffusion}. This raises the interesting question of whether the diffusion models can be adapted for solving the end-to-end problem from discovery to inference. 

\section{Preliminaries}

\noindent\textbf{Notation.} To distinguish random variables from their instantiation, we represent the former with capital letters and the latter with the corresponding lowercase letters.
To distinguish between the nodes in the causal graph and diffusion random variables, we use subscripts to denote graph nodes. Let $[n]:=\{1,\ldots,n\}$. 

\noindent\textbf{Structural Causal Models.} Consider a directed acyclic graph (DAG) $\cG$ with nodes $\{1,\dots,K\}$ in a topologically sorted order, where a node $i$ is represented by a (random) variable $X_i$ in some generic space $\cX_i \subset \R^{d_i}$. Let $\pa_i$ be the parents of node $i$ in $\cG$ and let $X_{\pa_i}\coloneqq \{X_j\}_{j\in\pa_i}$ be the variables of the parents of node $i$. 
A structural causal model $\cM$ describes the relationship between an observed/endogenous node $i$ and its causal parents.
Formally, an SCM $\cM := (\mathbf{F},p(\mathbf{U}))$ determines how a set of $K$ endogenous random variables $\mathbf{X} := \{X_1,\dots,X_K\}$ is generated from a set of exogenous random variables $\mathbf{U} := \{U_1,\dots,U_K\}$ with prior distribution $p(\mathbf{U})$
via a set of structural equations, $\mathbf{F}:=(f_1,\ldots,f_K)$ where $X_i := f_i(X_{\pa_i},U_i)$ for $i \in [K]$. Throughout this paper, we assume that the unobserved random variables are jointly independent (Markovian SCM), and the DAG $\cG$ is the graph induced by~$\cM$.  Every SCM $\cM$ entails a unique joint observational distribution satisfying the causal Markov assumption: $p(\mathbf{X}) = \prod_{i=1}^K p(X_i\mid X_{\pa_i})$. 


 
Structural causal models address Pearl's causal hierarchy (or ``ladder of causation''), which consists of three ``layers'' of causal queries in increasing complexity \citep{Pearl}: observational (or associational), interventional, and counterfactual. 
As an example, an interventional query can be formulated as ``What will be the effect on the population $\mathbf{X}$, if a variable $X_i$ is assigned a fixed value $\gamma_i$?'' The do-operator $\doo(X_i:=\gamma_i)$ represents the effect of setting variable $X_i$ to $\gamma_i$. Note that our proposed framework allows for more general sets of interventions as well, such as interventions on multiple variables denoted as $\doo(X_\cI:=\gamma)$ (where $\cI\subseteq [K]$, $X_{\cI} := (X_i)_{i \in \cI}$, $\gamma \in \R^{|\cI|}$). An intervention operation,  $\doo(X_\cI:=\gamma)$,  transforms the original joint distribution into an interventional distribution denoted by $p(\mathbf{X} \mid \doo(X_\cI:=\gamma))$.
On the other hand, a counterfactual query can be framed as ``What would have been the outcome of a particular factual sample $x^\fact:=(x_1^\fact,\dots,x_K^\fact)$, if the value of $X_\cI$ had been set to $\gamma$?''. Counterfactual estimation may be performed through the three-step procedure of 1) abduction: estimation of the exogenous noise $U$, 2) action: intervene $\doo(X_\cI:=\gamma)$, and 3) prediction: estimating $x^{\text{CF}}$ using the abducted noise and intervention values. \red{Note that in Step 1, we perform deterministic counterfactual reasoning, focusing on counterfactuals
pertaining to a single unit of the population.}

%



 
\noindent\textbf{Diffusion Models.} Given data from distribution $X^0\sim Q$, the objective of diffusion models is to construct an efficiently sampleable distribution approximating  $Q$. Denoising diffusion probabilistic models (DDPMs) \citep{sohl-dickstein, ddpm} accomplish this by introducing a forward noising process that adds isotropic Gaussian noise at each time step and a learned reverse denoising process. A common representation of diffusion models is a fixed Markov chain that adds Gaussian noise with variances $\beta_1,\ldots,\beta_T\in (0,1)$, generating latent variables $X^1,\ldots,X^T$, 
$q(X^t\mid x^{t-1}) = \cN(X^t; \sqrt{1-\beta_t}x^{t-1},\beta_t I)$ and $q(X^t\mid x^0)=\cN(X^t; \sqrt{\alpha_t}x^{0},(1-\alpha_t)I)$, where $\alpha_t:=\prod_{i=1}^t (1-\beta_i)$.  Here, $T \in \mathbb{Z}^+$, and $t \in \{0,\dots,T\}$ denotes the time index. 

By choosing sufficiently large $T$ and $\alpha_t$ that converge to $0$, we have $X^T$ is distributed as an isotropic Gaussian distribution. The learned reverse diffusion process attempts to approximate the intractable $q(X^{t-1}\mid x^t)$ 
using a neural network and is defined as a Markov chain with Gaussian transitions,
    $p_\theta(X^{t-1}\mid x^t) = \cN(X^{t-1};\mu_\theta(x^t,t),\Sigma_\theta(x^t,t))$.
Rather than predicting $\mu_\theta$ directly, the network could instead predict the Gaussian noise $\ep$ from $x^t=\sqrt{\alpha_t}x^0+\sqrt{1-\alpha_t} \ep$. \citet{ddpm} found that modeling $\ep$ instead of $\mu_\theta$, fixing $\Sigma_\theta$, and using the following reweighted loss function
\begin{align}\label{eq: simple objective}
   \E_{\mathrel{\substack{t\sim \mathrm{Unif}\{[T]\}\\X^0\sim Q\\ \ep\sim \cN(0,I)}}}[ \|\ep-\ep_\theta(\sqrt{\alpha_t}X^0+\sqrt{1-\alpha_t} \ep,t)\|^2],
\end{align}
works well empirically. We also utilize this loss function in our training.

\citet{ddim} demonstrate that it is possible to take a pretrained standard denoising diffusion probabilistic model (DDPM) and generalize the generation to non-Markovian processes. In particular, it is possible to use a pretrained DDPM model to obtain a deterministic sample given noise $X^T$, known as the denoising diffusion implicit model (DDIM), with \textit{reverse implicit diffusion process}
\begin{align}
\begin{split}\label{eq: ddim reverse}
     X^{t-1} &:=\sqrt{ \frac{\alpha_{t-1}}{\alpha_t}}X^t - \ep_\theta(X^t,t)\left(\sqrt{\alpha_{t-1}(1-\alpha_{t})/\alpha_t}-\sqrt{1-\alpha_{t-1}}\right).
     \end{split}
\end{align}
Note that the $X^t$ here is deterministic. We also use a \textit{forward implicit diffusion process} introduced by \citet{ddim}, derived from rewriting the DDIM process Eq.\ \ref{eq: ddim reverse} as an ordinary differential equation (ODE) and considering the Euler method approximation in the forward direction to obtain 
\begin{align}\label{eq: forward implicit diffusion}
\begin{split}
    X^{t+1} &:= \sqrt{\frac{\alpha_{t+1}}{\alpha_t}}X^t + \ep_\theta(X^t,t)\left(\sqrt{1-\alpha_{t+1}}-\sqrt{\alpha_{t+1}(1-\alpha_t)/\alpha_t}\right).
\end{split}
\end{align}
We utilize the DDIM framework in this work, in particular Eqs.\ \ref{eq: forward implicit diffusion} and \ref{eq: ddim reverse} will define the encoding (forward) and decoding (reverse) processes. Note that this ensures deterministic encoding and decoding. This construction produces a unique latent variable per observation, as well as a unique decoding, and also ensures we obtain the same output for repeated counterfactual queries.

\section{DCMs: Diffusion-based Causal Models} \label{sec: DCM}
In this section, we present our \DCM approach for modeling the SCMs and to answer causal queries. The \DCM approach falls in a general class of techniques that try to model a structural causal model by using an encoder-decoder pair. Consider a data generating process $X=f(X_{{pa}},U)$. The goal will to construct an encoding function $g$ and a decoding function $h$. The encoding function $g$ attempts to represent the information in $U$: for a pair $(X,X_{pa})$, $Z:=g(X,X_{pa})$ is the latent variable. The decoder takes the input $Z$ and $X_{pa}$ as input to attempt to reconstruct $X$: $\hat X=h(Z,X_{pa})$, where under perfect reconstruction, $\hat X=X$. The decoding function $h$ mimics the true structural equation $f$, although it does not need to be exactly equal. For example, there are infinitely many encodings that satisfy $Z=r(U)$ for all $U$ for an invertible function $r$.

We first explain the construction and the training process of a \DCM, and then explain how the model can be used for answering various causal queries. 
We start with some notations.
\begin{CompactItemize}
     \item Define $Z_i^t$ to be the $i$th endogenous node value at diffusion step $t$ of the \textit{forward} implicit diffusion process (Eq.~\ref{eq: forward implicit diffusion}), and let $Z_i\coloneqq Z_i^T$. 
    \item Define $\hat X_i^t$ to be $i$th endogenous node value at diffusion step $t$ of the \textit{reverse} implicit diffusion process (Eq.~\ref{eq: ddim reverse}), and let $\hat X_i\coloneqq \hat X_i^0$.   
\end{CompactItemize}

\noindent\textbf{Training a \DCM.} We train a diffusion model for \textit{each node}, taking denoised parent values as input. The parent values can be interpreted as additional covariates to the model, where one may choose to use classifier free guidance to incorporate the covariates \citep{ho2021classifierfree}. Empirically, we find that simply concatenating the covariates results in better performance than classifier free guidance.


We use the $\ep_\theta$ parametrization for the diffusion model from \citet{ddpm}, representing the diffusion model for node $i$ as $\ep_\theta^i(X, X_{\pa_i},t)$. The complete training procedure presented in Algorithm \ref{algo: DCM training} is only slightly modified from the usual training procedure, with the additions of the parents as covariates and training a diffusion model for each node. Since the generative models learned for generation of different endogenous nodes do not affect training of each other, these models may be trained in parallel. For each node in the graph, we can train a model in parallel as each diffusion model only requires the current node and parent values.
Our final DCM model is just a combination of these $K$ trained diffusion models $\ep_\theta^1,\dots,\ep_\theta^K$.

\begin{algorithm}[!ht]
    \caption{\DCM Training}
    \label{algo: DCM training}
    \textbf{Input:} Distribution $Q$, scale factors $\{\alpha_t\}_{t=1}^T$, causal DAG $\cG$ with node $i$ represented by $X_i$
    \begin{algorithmic}[1]
        \WHILE{not converged}
        \STATE Sample $X^0\sim Q$
        \FOR{$i=1,\ldots,K$}
            \STATE $t\sim \mathrm{Unif}[\{1,\ldots,T\}]$
            \STATE $\ep\sim\cN(0,I_{d_i})$ \hfill\Comment{$d_i$ is the dimension of $X_i$}
            \STATE Update parameters of node $i$'s diffusion model $\ep_\theta^i$, by minimizing the following loss:   
 \hspace*{1.75in} $\|\ep-\ep_\theta^i(\sqrt{\alpha_t}X_i^0+\sqrt{1- \alpha_t}\ep,X_{\pa_i}^0,t)\|_2^2$ \quad    (based on Eq.\ \ref{eq: simple objective})  
        \ENDFOR
         \ENDWHILE
    \end{algorithmic}
\end{algorithm}

We use all the variables $(X_1,\ldots,X_K)$ for the training procedure, because a priori, we do not assume anything on the possible causal queries, i.e., we allow for all possible target variables, intervened variables, etc. However, if we are only interested in some pre-defined set of queries, then the graph could be reduced accordingly. For example, if we are only interested in counterfactual estimate of a particular node with respect to an intervention of a predecessor, one can simply reduce it to a subgraph containing the target node, the intervened node and a backdoor adjustment set (e.g., the ancestors of the intervened node). This then reduces to only learning a single diffusion model. 

One major advantage of our proposed DCM approach is the ability to generalize to larger graphs. Since each diffusion model only uses the parents as input, modeling each node depends only on the incoming degree of the node (the number of causal parents). While the number of diffusion models scales with the number of non-root nodes, each model is generally small in terms of its parameter size and can be trained in parallel. Additionally, we may apply the proposed DCM approach to any setting where diffusion models are applicable: continuous variables, high dimensional settings, categorical data, images, etc.

\noindent\textbf{Encoding and Decoding Steps with \DCM.} With a \DCM, the encoding (resp.\ decoding) process is identical to the DDIM encoding (resp.\ decoding) process except we include the  parent values as additional covariates. Note that, given the model $\epsilon_\theta$, DDIM is a deterministic process as laid out in Eqs.\ \ref{eq: ddim reverse} and~\ref{eq: forward implicit diffusion}. Let us focus on a node $i \in [K]$ (same process is repeated for each node $i$). The encoding process takes $X_i$ and its parent values $X_{\pa_i}$ as input and maps it to a latent variable $Z_i$. The decoding process takes $Z_i$ and $X_{\pa_i}$ as input to construct $\hat{X}_i$ (an approximation of $X_i$). Formally, using the forward implicit diffusion process in Eq.\ \ref{eq: forward implicit diffusion}, given a sample $X_i$, we encode a unique latent variable $Z_i:=Z_i^T$, using the recursive formula 
\begin{align} \label{eq:1}
  Z^{t+1}_i := \sqrt{\frac{\alpha_{t+1}}{\alpha_t}}Z^t_i + \ep_\theta^i(Z^t_i,X_{\pa_i},t)\left(\sqrt{1-\alpha_{t+1}}-\sqrt{\frac{\alpha_{t+1}(1-\alpha_t)}{\alpha_t}}\right), \forall t = 0,..,T-1,  
\end{align}
where $Z_i^0 :=X_i$.
The latent variable $Z_i$ acts as a proxy for the exogenous noise $U_i$.
Using the reverse implicit diffusion process from DDIM in Eq.\ \ref{eq: ddim reverse}, given a latent vector $Z_i$ we obtain a deterministic decoding $\hat X_i:=\hat{X}_i^0$, using the recursive formula
\begin{align} \label{eq:2}
\hat X^{t-1}_i := \sqrt{ \frac{\alpha_{t-1}}{\alpha_t}}\hat X^t_i -\ep_\theta^i( \hat X_i^t,X_{\pa_i},t)  \left(\sqrt{\frac{\alpha_{t-1}(1-\alpha_{t})}{\alpha_t}}-\sqrt{1-\alpha_{t-1}}\right), \; \text{for all}\; t = T,\ldots ,1,
\end{align}
where $\hat{X}^T:=Z_i$. In the following, we use $\enc_i(X_i,X_{\pa_i})$ and $\dec_i(Z_i,X_{\pa_i})$ to denote the encoding and decoding functions for node $i$ defined in Eqns.~\ref{eq:1} and~\ref{eq:2} respectively. See Algorithms \ref{algo: DCM encoding} and \ref{algo: DCM decoding} for detailed pseudocodes.

\begin{algorithm}[!ht]
    \caption{$\enc_i(X_i,X_{\pa_i})$}
    \label{algo: DCM encoding}
    \textbf{Input:}  $X_i$, $X_{\pa_i}$
    \begin{algorithmic}[1]
      \STATE $Z_i^0\leftarrow X_i$ 
        \FOR{$t=0,\ldots,T-1$}
           \STATE 
$Z^{t+1}_i \leftarrow\sqrt{\frac{\alpha_{t+1}}{\alpha_t}}Z^t_i + \ep_\theta^i(Z^t_i,X_{\pa_i},t)\left(\sqrt{1-\alpha_{t+1}}-\sqrt{\frac{\alpha_{t+1}(1-\alpha_t)}{\alpha_t}}\right)$
        \ENDFOR
    \STATE Return $Z_i\coloneqq Z_i^T$
    \end{algorithmic}
\end{algorithm}

\begin{algorithm}[!ht]
    \caption{$\dec_i(Z_i,X_{\pa_i})$}
    \label{algo: DCM decoding}
    \textbf{Input:}  $Z_i$, $X_{\pa_i}$
    \begin{algorithmic}[1]
     
      \STATE $\hat X^T\leftarrow Z_i$
        \FOR{$t=T,\ldots,1$}
         
\STATE 
$\hat X^{t-1}_i \leftarrow\sqrt{ \frac{\alpha_{t-1}}{\alpha_t}}\hat X^t_i -\ep_\theta^i( \hat X_i^t,X_{\pa_i},t)  \left(\sqrt{\frac{\alpha_{t-1}(1-\alpha_{t})}{\alpha_t}}-\sqrt{1-\alpha_{t-1}}\right)$
              
    \ENDFOR    
    \STATE Return $\hat X_i\coloneqq \hat X_i^0$
    \end{algorithmic}
\end{algorithm}
 


\noindent\textbf{Answering Causal Queries with a Trained \DCM.}  We now describe how a trained \DCM model can be used for (approximately) answering causal queries. Answering observational and interventional queries require sampling from the observational and the interventional distribution respectively. With counterfactuals, a query is at the unit level, where the structural assignments are changed, but the exogenous noise is identical to that of the observed datum.

\textbf{(a) Generating Samples for Observational/Interventional Queries.} Samples from a \DCM model that approximates the interventional distribution $p(\mathbf{X}\mid \doo(X_\cI:=\gamma))$ can be generated as follows. For an intervened node $i$ with intervention $\gamma_i$, the sampled value is always the intervention value, therefore we generate $\hat X_i:=\gamma_i$. 
For a non-intervened node $i$, assume by induction we have the generated parent values $\hat X_{\pa_i}$. To generate $\hat X_i$, we first sample the latent vector $Z_i\sim\cN(0,I_{d_i})$ where $d_i$ is the dimension of $X_i$. Then taking $Z_i$ as the noise for node $i$, we compute $\hat X_i:=\dec_i(Z_i,\hat X_{\pa_i})$ as the generated sample value for node $i$. This value $\hat X_i$ is then used as the parent value for the children of node $i$. Samples from a \DCM model that approximates the observational distribution $p(\mathbf{X})$ can be generated by setting $\cI=\emptyset$. See Algorithm~\ref{algo: cdm intervention sampling} for the pseudocode.

\begin{algorithm}[!ht]
    \caption{Observational/Interventional Sampling}
    \label{algo: cdm intervention sampling}
    \textbf{Input:} Intervention set $\cI$ with values $\gamma$ ($I = \emptyset$ for observational sampling)
    \begin{algorithmic}[1]
        \FOR{$i=1,\ldots,K$} \hfill\COMMENT{in topological order}
            \STATE $Z_i \sim\cN(0,I_{d_i})$
            \IF{$i\in \cI$}
            \STATE $\hat X_i \leftarrow \gamma_i$
            \ELSE
            \STATE $\hat X_i \leftarrow \dec_i(Z_i,\hat X_{\pa_i})$
            \ENDIF
        \ENDFOR
    \STATE Return $\hat X := (\hat X_1,\dots,\hat X_K)$
    \end{algorithmic}
\end{algorithm}

\begin{algorithm}[!ht]
    \caption{Counterfactual Estimation}
    \label{algo: DCM counterfactual inference}
    \textbf{Input:} Intervention set $\cI$ with values $\gamma$, factual sample~$x^\fact := (x_1^\fact,\dots,x_K^\fact)$
    \begin{algorithmic}[1]
     \FOR{$i=1,\ldots,K$} \hfill\COMMENT{in topological order}
        \IF{$i\in \cI$}
            \STATE $\hat x^\cf_i \leftarrow \gamma_i$
            \ELSIF{$i$ is not a descendant of any intervened node in $\cI$}
            \STATE $\hat x^\cf_i \leftarrow x^\fact_i$
            \ELSE
            \STATE  $z^\fact_i\leftarrow \enc_i(x^\fact_i,x_{\pa_i}^\fact)$ \hfill \Comment{abduction step}
            \STATE $\hat x^\cf_i \leftarrow \dec_i(z^\fact_i,\hat x_{\pa_i})$ \hfill\Comment{action and prediction steps}
            \ENDIF
        \ENDFOR
     \STATE Return $\hat x^\cf := (\hat x^\cf_1,\dots,\hat x^\cf_K)$
    \end{algorithmic}
\end{algorithm}

\textbf{(b) Counterfactual Queries.} 
Consider a factual observation $x^\fact:=(x_1^\fact,\dots,x_K^\fact)$ and interventions on a set of nodes $\cI$ with values $\gamma$. We use a \DCM model to construct a counterfactual estimate $\hat{x}^\cf$ as follows. The counterfactual estimate only differs from the factual value on intervened nodes or descendants of an intervened node. Similarly to interventional queries, for each intervened node $i \in \cI$, $\hat x_i^\cf:=\gamma_i$. For each non-intervened node $i$ that is a descendant of any intervened node, assume by induction that we have the generated counterfactual estimates $\hat x_{\pa_i}^\cf$. To obtain $\hat x_i^\cf$, we first define the estimated factual noise as $\hat z_i^\fact:=\enc_i(x_i^\fact, x_{\pa_i}^\fact)$. Then we generate our counterfactual estimate by using $\hat z_i^\fact$ as the noise for node $i$, by decoding, $\hat x_i^\cf := \dec_i(\hat z_i^\fact,\hat x_{\pa_i}^\cf)$. See Algorithm~\ref{algo: DCM counterfactual inference} for the pseudocode. 

Note that with $x^\fact$, we assumed full observability,\!\footnote{This is a common assumption in literature also made in all the related work e.g.,~\citep{vaca,carefl,pawlowski2020deep}.} since because Algorithm \ref{algo: DCM counterfactual inference} produces a counterfactual estimate for each node. However, when intervening on $X_\cI$ and if the only quantity of interest is  counterfactual on some $X^\star$, then you only need factual samples from $\{X_i:X_i\text{ is on a path}$ $\text{from } X_{\cI} \to X^\star\}$~\citep{sahanoise}. In practice, this could be further relaxed by imputing for missing data, which is beyond the scope of this work.


\section{Bounding Counterfactual Error}\label{sec: theory}
We now establish {\em sufficient} conditions under which the counterfactual estimation error can be bounded. In fact, the results in this section not only hold for diffusion models, but to a more general setting of conditional latent variable models satisfying certain properties. Another feature of this result, is that it also extends, under an additional assumption, to the more challenging higher-dimensional case. 
All proofs from this section are collected in~\cref{app: proofs}. 

We focus on learning a single conditional latent variable model for an endogenous node $X_i$, given its parents $X_{\pa_i}$, as the models learned for different endogenous nodes do not affect each other. Since the choice of node $i$ plays no role, we drop the subscript $i$ in the following and refer to the node of interest as $X$, its causal parents as $X_\pa$, its corresponding exogenous variables as $U$, and its structural equation as $X:=f(X_\pa,U)$. Let the encoding function $g:\cX \times \cX_{\pa} \to \cZ$  and the decoding function $h: \cZ\times \cX_{\pa} \to \cX$, where $\cZ$ is the latent space. In the \DCM context, the functions $g$ and $h$ correspond to $\enc$ and $\dec$ functions, respectively. 

It is well-known that certain counterfactual queries are not identifiable from observational data without making assumptions on the functional relationships, even under causal sufficiency~\citep{pearl2009causal}. Consequently, recent research has been directed towards understanding the conditions under which identifiability results can be obtained~\citep{lu2020sample,nasr2023counterfactual,nasr2023}. Assumption 2 of our Theorem~\ref{thm: cf theory}, ensures that the true counterfactual outcome is identifiable, see e.g.,~\citep[Theorem 1]{lu2020sample} or~\citep[Theorem 5]{nasr2023counterfactual}. In the context of learned structural causal models to determine whether a given counterfactual query can be answered with sufficient accuracy, requires also assumptions on the learned SCM, e.g., encoder and decoder in this case.






Our first result presents sufficient conditions on the latent variable encoding function and the structural equation under which we can recover latent exogenous variable up to a (possibly nonlinear) invertible function. We start with a one-dimensional exogenous noise $U$ and variable $X \in \cX \subset \R$. In Section~\ref{sec: multivariate theory}, we provide a similar theorem for the higher-dimensional case where $X \in \R^m$ for $m\ge 3$ in Theorem \ref{thm: multivariate cf theory} with a stronger assumption on the Jacobian of $f$ and $g$.
\begin{restatable}{theorem}{first}
\label{thm: cf theory}
Assume for $X \in \cX\subset \R$ and exogenous noise $U\sim \mathrm{Unif}[0,1]$, $X$ satisfies the structural equation: $X:=f(X_{\pa},U)$, where $X_\pa \in \cX_\pa \subset \R^d$ are the parents of node $X$ and $U\indep X_\pa$. Consider an encoder-decoder model with encoding function $g:\cX\times \cX_{\pa} \to \cZ$ and decoding function $h: \cZ\times \cX_{\pa} \to \cX$, 
$    Z:=g(X,X_{\pa}), \quad \hat X:=h(Z,X_{\pa})$.
Assume the following conditions:
\begin{CompactEnumerate}   
    \item The encoding is independent of the parent values, $g(X,X_{\pa})\indep X_{\pa}.$ 
    \item The structural equation $f$ is differentiable and strictly increasing with respect to $U$ \red{for all $x_\pa \in \cX_\pa$}. 

    \item The encoding $g$ is invertible and differentiable with respect to $X$ \red{for all $x_\pa \in \cX_\pa$}.
\end{CompactEnumerate}
Then, $g(X,X_\pa) = \tilde q (U)$ for an invertible function $\tilde q$.
\end{restatable}

\noindent\textbf{Discussion on Assumptions Underlying Theorem~\ref{thm: cf theory}.}

\textbf{(1)} Assumption 1 of independence between the encoding and the parent values may appear strong, but is in fact often valid. \red{For example, in the additive noise setting with $f(X_{\pa},U):=f'(X_\pa)+U$ where $X_\pa$ and $U$ are independent, if the fitted model $\hat f \equiv f'$, then the encoder $g(X,X_\pa)=f(X_{\pa},U)-\hat f(X_\pa)=U$ and by definition $U$ is independent of $X_\pa$.}\footnote{In general, if we have a good approximation $f$ by some $\hat{f}$, then the encoding $g(X,X_\pa)=X-\hat{f}(X_\pa)$ would be close to $U$, as also noted by~\citep{hoyer2008nonlinear}.} The same assumption also appears in other related  results on counterfactual identifiability in bijective SCMs, see, e.g.,~\citep[Theorem 5.3]{nasr2023} and proof of Theorem 5 in~\citep{nasr2023counterfactual}. 
We conduct empirical tests to further confirm this assumption by examining the dependence between the parents and encoding values. Our experimental results show that DCMs consistently fail to reject the null hypothesis of independence. This implies that independent encodings can be found in practice. We provide the details of these experiments in Appendix~\ref{app: independence exp}.\!\footnote{To encourage independence, one could also modify the original diffusion model training objective to add a Hilbert-Schmidt independence criterion (HSIC) \citep{HSIC} regularization term. Our experiments did not show a clear benefit of using this modified objective, and we leave further investigation here for future work.}.


\textbf{(2)} \red{Assumption 2 is always satisfied under the additive noise model, where $f(X_{\pa},U):=f'(X_{\pa})+U$.}
\red{It is also satisfied by post non-linear models, under the standard requirements on identifiability as expressed in~\citep{zhang2012identifiability}.}\!\footnote{\red{\citet{zhang2012identifiability} (see Eqn. 2) defined post-nonlinear models as $f(X_\pa,U):=f_2(f_1(X_\pa)+U)$ where $X_\pa$ and $U$ are independent, function $f_1$ is nonconstant, and $f_2$ is invertible. The invertibility assumption on $f_2$ implies that $f$ is strictly increasing (or decreasing) in $U$. Additionally, \citep{zhang2012identifiability} make a differentiability of $f_2$ assumption (see Assumption A1) for identifiability, which implies the $f$ is differentiable in $U$.}}
\red{Assumption 2 is also satisfied by heteroscedastic noise models ~\citep{strobl2023identifying}.}\!\footnote{Heteroscedastic noise models are generally defined as $f(X_{\pa},U): = f'(X_{pa}) + g(X_{pa}) \cdot U$ where the function $g$ is assumed to be strictly positive (see Definition 1 in~\citep{strobl2023identifying}, which makes it compatible with our Assumption 2 of Theorem 1.}
Again, the recent results about counterfactual identifiability, e.g.,~\citep[Theorem 5]{nasr2023counterfactual} and~\citep[Theorem 1]{lu2020sample}, also utilize the same assumption.  

\red{Through our strictly increasing in $U$ assumption, we obviate distinguishing cases like $f(X_{\text{pa}},U):=X_{\text{pa}} + U$ vs.\ $f(X_{\text{pa}},U):=X_{\text{pa}} - U$, which otherwise will be indistinguishable for symmetric distributions $U$ (see also remark below). For example, for any fixed value of $X_{\text{pa}}$, $f(X_{\text{pa}},U):=X_{\text{pa}} - U$ is not increasing in $U$, so such structural equations are automatically eliminated in Theorem 1.}




\textbf{(3)} We may consider transformations of the uniform noise $U$ to obtain other settings, for example additive Gaussian noise. 
For a continuous random variable $U'$ with invertible CDF $F$ and the structural equation $f(\cdot,F(\cdot))$, we have $U'\stackrel{d}{=}F^{-1}(U)$ and the results similarly hold.


We now discuss some consequences of Theorem~\ref{thm: cf theory} for estimating counterfactual outcomes.

\noindent\textbf{1. Perfect Estimation.} Using~\cref{thm: cf theory}, we now look at a condition under which the counterfactual estimate produced by the encoder-decoder model matches the true counterfactual outcome.\footnote{Note that identifiability of
the counterfactual outcomes, does not require identifiability of the SCM.}. 
The idea here been, if no information is lost in the encoding and decoding steps, i.e., $h(g(X,X_\pa),X_\pa) = X$, and assuming~\cref{thm: cf theory}  ($g(X,X_\pa) = \tilde q(U)$), we have $h(\tilde q (U),X_\pa) = X = f(X_\pa,\tilde q^{-1}(U))$. This means that in the abduction step, the encoder-decoder model could recover $\tilde q(U)$, but in the prediction step, it first applies the inverse of $\tilde q$ to the recovered exogenous variable, and then $f$. Thus, the counterfactual estimate equals the true counterfactual outcome. We formalize this in Corollary~\ref{cor: iden}.

\vspace*{-1ex}
\begin{restatable}{corollary}{iden} \label{cor: iden}
Assume the conditions of~\cref{thm: cf theory}. Furthermore, assume the encoder-decoder model pair $(g,h)$ satisfies: $h(g(X,X_{\pa}),X_{\pa}) = X$.
Consider a factual sample pair $x^\fact:=(x,x_\pa)$  where $x:=f(x_\pa,u)$ and an intervention $\doo(X_{\pa}:=\gamma)$. Then, the counterfactual estimate, given by $h(g(x,x_\pa),\gamma)$ matches the true counterfactual outcome $x^\cf := f(\gamma,u)$.
\end{restatable}
 \vspace*{-1ex}

\noindent\textbf{Comparison with Recent Related Work.} Recent studies by~\citet{nasr2023counterfactual} and~\citet{nasr2023} have explored the problem of estimating counterfactual outcomes with learned SCMs. In particular,~\citet[Theorem 5]{nasr2023counterfactual} consider a setting where the SCM $X:=f(X_\pa,U)$ is learned with a {\em bijective} (deep conditional generative) model $\hat{f}(X_\pa,\hat{U})$.~\citet[Theorem 5.3]{nasr2023} considered a closely related problem of learning a ground-truth bijective SCM.
The conditions underlying ours and these results are not directly comparable because, unlike our setup, they do not explicitly consider an encoder-decoder model.  Our results provide precise conditions on the encoder $g$ and decoder $h$ for recovering the correct counterfactual outcome, and we can extend these results to obtain counterfactual estimation error bounds under relaxed assumptions, a problem that has not been addressed previously.

Counterfactual identifiability in a different context of reinforcement learning was also established by~\citep{lu2020sample}. Their result relies on incomparable assumptions on state-action pairs. Furthermore, our proof techniques are quite different from~\citet{lu2020sample} who rely on a   technique based on analyzing conditional quantile, unlike an algebraic technique employed here.


\noindent\textbf{2. Estimation Error.}
Another consequence of Theorem \ref{thm: cf theory} is that it can bound the counterfactual error in terms of the reconstruction error of the encoder-decoder model. Informally, the following corollary shows that if the reconstruction $h(g(X,X_{\pa}),X_{\pa})$ is ``close'' to $X$ (measured under some metric $d(\cdot,\cdot)$), then such encoder-decoder models can provide ``good'' counterfactual estimates.  To the best of our knowledge, this is the first result that establishes a bound on the counterfactual error in relation to the reconstruction error of these encoder-decoder models.
\begin{restatable}{corollary}{est} \label{cor: est}
Let $\gamma \ge 0$. Assume the conditions of~\cref{thm: cf theory}.  Furthermore, assume the encoder-decoder model pair $(g,h)$ under some metric $d$ (e.g., $\|\cdot \|_2$), has reconstruction error less than $\tau$: $d(h(g(X,X_{\pa}),X_{\pa}),X)\le \tau$. Consider a factual sample pair $x^\fact:=(x,x_\pa)$  where $x:=f(x_\pa,u)$ and an intervention $\doo(X_{\pa}:=\gamma)$. Then, the error between the true counterfactual $x^\cf := f(\gamma,u)$ and counterfactual estimate given by $h(g(x,x_\pa),\gamma)$ is at most $\tau$. In other words, $d(h(g(x,x_\pa),\gamma), x^\cf)\le \tau$.
\end{restatable}
The above result suggests that the reconstruction error can serve as an estimate for the counterfactual error. While the true value of $\tau$ is unknown, we may compute a reasonable bound by computing the reconstruction error over the dataset. 

\subsection{Extension of Theorem~\ref{thm: cf theory} to Higher-Dimensional Setting}\label{sec: multivariate theory}
In this section, we present an extension of~\cref{thm: cf theory} to a higher dimensional setting and use it to provide counterfactual identifiability and estimation error results. \red{Since we are now dealing with vector-valued functions, we use Jacobians.  Let $J f_{x_\pa}$ denote the Jacobian matrix obtained by evaluating the Jacobian (with respect to $U$) of $f(X_\pa,U)$ at $X_\pa = x_\pa$. Let $J g_{x_\pa}$ denote the Jacobian matrix obtained by evaluating the Jacobian (with respect to $X$) of $g(X,X_\pa)$ at $X_\pa = x_\pa$.} 

\begin{restatable}{theorem}{sec}\label{thm: multivariate cf theory}
Assume for $X\in \cX\subset \R^m$ and continuous exogenous noise $U\sim \mathrm{Unif}[0,1]^m$ for $m\ge 3$, and $X$ satisfies the structural equation 
\begin{align}
    X=f(X_\pa,U)
\end{align}
where $X_\pa \in \cX_\pa\subset \R^d$ are the parents of node $X$ and $U\indep X_\pa$. Consider an encoder-decoder model with encoding function $g:\cX\times \cX_\pa \to \cZ$ and decoding function $h: \cZ\times \cX_\pa \to \cX$, 
\begin{align}
    Z:=g(X,X_\pa), \quad \hat X:=h(Z,X_\pa).
\end{align}
Assume the following conditions:
\begin{CompactEnumerate}
\item The encoding is independent of the parent values, $g(X,X_\pa)\indep X_\pa.$ 
    \item The structural equation $f$ is invertible and differentiable with respect to U, and $J f_{x_\pa}$ is p.d. for all ${x_\pa}\in\cX_\pa$.
    \item The encoding $g$ is invertible and differentiable with respect to $X$, and $Jg_{x_\pa}$ is p.d. for all ${x_\pa}\in\cX_\pa$. 
    \item The encoding $q_{x_\pa}(U)\coloneqq g(f({x_\pa},U),{x_\pa})$ satisfies $J q_{x_\pa}\vert_{q_{x_\pa}^{-1}(z)} = c({x_\pa})A$ for all $z\in \cZ$ and ${x_\pa}\in\cX_\pa$, where $c$ is a scalar function and $A$ is an orthogonal matrix.
\end{CompactEnumerate}
Then, $g(f(X_\pa,U),X_\pa)=\tilde q(U)$ for an invertible function $\tilde q$.
\end{restatable}
\red{The interpretations behind Assumptions 1, 2, and 3 in Theorem~\ref{thm: multivariate cf theory} are similar to  corresponding assumptions in Theorem~\ref{thm: cf theory}. Assumption 4 however is technical, and we will explain the need for it below.}
In Corollaries~\ref{cor: multiden} and~\ref{cor: multest} (Appendix~\ref{app: proofs}), we restate Corollaries~\ref{cor: iden} and~\ref{cor: est} to this higher-dimensional setting. The proofs are identical to that of Corollaries~\ref{cor: iden} and~\ref{cor: est}, with the only change being that the role of~\cref{thm: cf theory} in those proofs is now replaced by~\cref{thm: multivariate cf theory}.

\noindent\textbf{On Negative Result of~\citet{nasr2023counterfactual}.}
\citet{nasr2023counterfactual} presented a general counterfactual impossibility identification result under multidimensional exogenous noise. The construction in \cite{nasr2023counterfactual} considers two structural equations $f,f'$ that are indistinguishable in distribution. Formally, let $R \in \R^{m \times m}$ be a rotation matrix, and $U \in \R^m$ be a standard (isotropic) Gaussian random vector. Define, 
\begin{equation}\label{eq: rotation construction}
f'(X_{\pa},U) = \begin{cases}
    f(X_{\pa},U) &\text{ for } X_{\pa} \in A\\
    f(X_{\pa},R \cdot U) &\text{ for } X_{\pa} \in B
\end{cases},
\end{equation}
where the domain $\cX_{\pa}$ is split into disjoint $A$ and $B$. Now, $f$ and $f'$ generate different counterfactual outcomes, for counterfactual queries with evidence in $A$ and intervention in $B$ (or the other way around).

In \cref{thm: multivariate cf theory}, we avoid this impossibility result by assuming that we can construct an encoding of a ``special'' kind captured through our Assumption 4. In particular, consider the encoding $q_{x_{\pa}}$ at a specific parent value $x_{\pa}$ as a function of the exogenous noise $U$. 
The assumption states that the Jacobian of the encoding is equal to $c(x_\pa)A$ for a scalar function $c$ and orthogonal matrix $A$. However, it is important to acknowledge that this assumption is highly restrictive and difficult to verify, not to mention challenging to construct in practice with just observational data. Our intention is that these initial ideas can serve as a starting point for addressing the impossibility result, with the expectation that subsequent results will further refine and expand upon these ideas.

\section{Experimental Evaluation} \label{sec: exp}

In this section, we evaluate the empirical performance of \DCM for answering causal queries on both synthetic and real world data. Additional semi-synthetic experimental evaluations are presented in Appendix~\ref{app: semi}. For the semi-synthetic experiments, we leverage the {\em Sachs} dataset~\citep{sachs}.

\noindent\textbf{Diffusion Model Implementation and Training.}\label{subsec: model implementation}
For our implementation of the $\ep_\theta$ model in \DCM, we use a simple fully connected neural network with three hidden layers of size $[128, 256, 256]$ and SiLU activation \citep{silu}. We fit the model using Adam with a learning rate of 1e-4, batch size of 64, and train for 500 epochs. Since the root nodes lack parents, the only form of counterfactual reasoning involves directly intervening on the root node, which may be done trivially. Therefore, for root nodes, we do not train diffusion models and instead sample from the empirical distribution of the training data. Additional details about the diffusion model parameters are in \Cref{app: hyperparam}.

\noindent\textbf{Compared Approaches.} For a fair comparison, our evaluation centers on methodologies that allow for both interventional and counterfactual estimation. With this criteria, we primarily compare \DCM to two recently proposed state-of-the-art schemes VACA~\citep{vaca} and CAREFL \citep{carefl}, and a general regression model that assumes an additive noise model which we refer to as ANM.\!\footnote{{In spite of our best efforts, we were unable to run a proper comparison against the very recent approach proposed by \citet{javaloy2023causal} due to challenges in adapting their code into our settings.}} 
For VACA and CAREFL, we use the code provided by their respective authors. The ANM approach performs model selection over a variety of models, including linear and gradient boosted regressor, and we use the implementation from the popular \textit{DoWhy} causal inference package \citep{dowhy,dowhy_gcm}. Additional details on how ANM answers causal queries are provided in Appendix \ref{app: anm} and implementation details for VACA, CAREFL, and ANM are in \Cref{app: hyperparam}.

\subsection{Synthetic Data Experiments}
For generating quantitative results, we use synthetic experiments since we know the exact structural equations, and hence we have access to the ground-truth observational, interventional, and counterfactual distributions. We provide two sets of synthetic experiments, a set of two larger graphs provided here and a set of four smaller graphs provided in \cref{app: more syn exp}. The two larger graphs, include a \textit{ladder} graph structure (see~\Cref{fig: causal graph diagrams}) and a randomly generated graph structure. Both graphs are comprised of $10$ nodes of three dimensions each, and the random graph is a randomly sampled directed acyclic graph (see Appendix~\ref{app: rand} for more details). Since each diffusion model only uses the parents as input, modeling each node depends only on the incoming degree of the node (the number of causal parents). 

Following \citet{vaca}, for the observational and interventional distributions, we report the Maximum Mean Discrepancy (MMD) \citep{mmd} between the true and estimated distributions. For counterfactual estimation, we report the mean squared error (MSE) of the true and estimated counterfactual values. Again following~\citet{vaca}, we consider two broad classes of structural equations:
 \begin{CompactEnumerate}
          \item Additive Noise Model (NLIN): $f_i(X_{\pa_i},U_i)=f'(X_{\pa_i}) + U_i$. In particular, we will be interested in the case where $f_i$'s are non-linear.
           \vspace*{-0.1cm}
     \item Nonadditive Noise Model (NADD): $f_i(X_{\pa_i},U_i)$ is an arbitrary function of $X_{\pa_i}$ and $U_i$. 
 \end{CompactEnumerate}

 To prevent overfitting of hyperparameters, we randomly generate these structural equations for each initialization. Each structural equation is comprised a neural network with a single hidden layer of 16 units and SiLU activation \citep{silu} with random weights from sampled from $[-1,1]$.

Each simulation generates $n=5000$ samples as training data. Let $\hat M$ be a fitted causal model and $M^\star$ be the true causal model, both capable of generating observational and interventional samples, and answering counterfactual queries. Each pair of graphs and structural equation type is evaluated for $20$ different initialization, and we report the mean value. We provide additional details about our observational, interventional, and counterfactual evaluation frameworks in Appendix~\ref{app: eval_frame}.

\begin{table*}[!t]
\begin{small}
    \begin{center}
    
\renewcommand{\arraystretch}{1.4}
\setlength\tabcolsep{5pt}
    \centering
    \begin{tabular}{ccl  rrrr}
\toprule
 &  &   & \DCM  & ANM &  VACA & CAREFL   \\
     \cmidrule(r){4-7}
      \multicolumn{2}{c}{SCM} & Metric   & ($\times 10^{-2}$)   & ($\times 10^{-2}$) & ($\times 10^{-2}$)& ($\times 10^{-2}$)\\
\cmidrule(r){1-7}
\multirow{6}{*}{\rotatebox[origin=c]{90}{ {\textit{Ladder}}}}  &\multirow{3}{*}{\rotatebox[origin=c]{90}{NLIN}} & Obs. MMD   & \textbf{0.44$\pm$0.14} & 0.63$\pm$0.21 & 2.82$\pm$0.83 & 13.41$\pm$1.14 \\
 &  &  Int. MMD   & \textbf{1.63$\pm$0.20} & 1.80$\pm$0.17 & 4.48$\pm$0.78 & 15.01$\pm$1.23 \\
 &  &  CF. MSE   & \textbf{3.42$\pm$1.67} & 10.65$\pm$2.48 & 41.03$\pm$19.00 & 17.46$\pm$6.04 \\
\cline{2-7}
&\multirow{3}{*}{\rotatebox[origin=c]{90}{NADD}} & Obs. MMD   & \textbf{0.32$\pm$0.11} & 0.40$\pm$0.16 & 3.22$\pm$1.05 & 14.60$\pm$1.34 \\
 &  &  Int. MMD   & \textbf{1.54$\pm$0.17} & 1.57$\pm$0.15 & 5.13$\pm$1.16 & 16.87$\pm$1.85 \\
 &  &  CF. MSE   & \textbf{4.28$\pm$2.39} & 10.71$\pm$5.47 & 27.42$\pm$12.34 & 22.26$\pm$13.75 \\
\hline \hline
\multirow{6}{*}{\rotatebox[origin=c]{90}{ {\textit{Random}}}}  &\multirow{3}{*}{\rotatebox[origin=c]{90}{NLIN}} & Obs. MMD   & \textbf{0.28$\pm$0.12} & 0.47$\pm$0.15 & 1.82$\pm$0.73 & 12.11$\pm$1.21 \\
 &  &  Int. MMD   & \textbf{1.45$\pm$0.07} & 1.88$\pm$0.22 & 3.52$\pm$1.03 & 14.15$\pm$2.34 \\
 &  &  CF. MSE   & \textbf{9.51$\pm$13.12} & 23.68$\pm$28.49 & 82.10$\pm$78.49 & 52.57$\pm$82.03 \\
\cline{2-7}
&\multirow{3}{*}{\rotatebox[origin=c]{90}{NADD}} & Obs. MMD   & \textbf{0.19$\pm$0.05} & 0.31$\pm$0.14 & 2.09$\pm$0.60 & 12.63$\pm$1.10 \\
 &  &  Int. MMD   & \textbf{1.42$\pm$0.25} & 1.73$\pm$0.44 & 4.24$\pm$1.40 & 14.65$\pm$1.76 \\
 &  &  CF. MSE   & \textbf{20.13$\pm$57.52} & 44.76$\pm$86.02 & 124.82$\pm$275.09 & 54.29$\pm$83.68 \\
\bottomrule
\end{tabular}
\caption{Mean$\pm$standard deviation of observational, interventional, and counterfactual queries of the ladder and random SCMs in nonlinear and nonadditive settings over $20$ random initializations of the model and training data. The values are multiplied by $100$ for clarity.}
\label{tab: results}
\end{center}
\end{small}
\end{table*}

\noindent\textbf{Synthetic Experiments Results.} In Table \ref{tab: results}, we provide the performance of all evaluated models for observational, interventional, and counterfactual queries, averaged over $20$ separate initializations of models and training data, with the lowest value in each row bolded. The values are multiplied by $100$ for clarity. We also provide boxplots of the performances in \cref{fig: box plots ladder random scm} (Appendix~\ref{app: add exp}).


We see \DCM and ANM are the most competitive approaches, with similar performance on observational and interventional queries. If the ANM is the correctly specified model, then the ANM encoding should be close to the true encoding, assuming the regression model fit the data well. We see this in the nonlinear setting, ANM performs well but struggles to outperform DCM, perhaps due to the complexity of fitting a neural network using classical models. Note thats since observational and interventional queries are inherently easier than counterfactual queries, it is natural that we observe smaller improvements over other baselines. 

Our proposed \DCM method exhibits superior performance compared to VACA and CAREFL, often by as much as an order of magnitude. The better performance of \DCM over CAREFL may be attributed to the fact that \DCM uses the causal graph, while CAREFL only relies on a non-unique causal ordering. In the case of VACA, the limited expressiveness of the GNN encoder-decoder might be the reason behind its inferior performance, especially when dealing with multivariable, multidimensional complex structural equations as considered here, a shortcoming that has also been acknowledged by the authors of VACA~\citep{vaca}. Furthermore, VACA performs \textit{approximate} inference, e.g. when performing a counterfactual query with $\doo(X_1=2)$, the predicted counterfactual value for $X_1$ is not exactly $2$ due to imperfections in the reconstruction. This design choice may result in downstream compounding errors, possibly explaining discrepancies in performance. To avoid penalizing this feature, all metrics are computed using downstream nodes from intervened nodes. \red{To evaluate computational efficiency, we compared DCM with VACA and CAREFL using a single experimental setup (see~\cref{sec: running time}). The results suggest that DCM is significantly more computationally efficient than both VACA and CAREFL, with training taking 7 times less time.}
Lastly, the standard deviation of \DCM is small relative to the other models, demonstrating relative consistency, which points to the robustness of our proposed approach.
 
\subsection{Real Data Experiments I}
We evaluate \DCM on interventional real world data by evaluating our model on the electrical stimulation interventional fMRI data from \citep{thompson_data_2020}, using the experimental setup from \citep{carefl}. The fMRI data comprises samples from 14 patients with medically refractory epilepsy, with time series of the Cingulate Gyrus (CG) and Heschl's Gyrus (HG). The assumed underlying causal structure is the bivarate graph CG $\to$ HG. Our interventional ground truth data comprises an intervened value of CG and an observed sample of HG.  We defer the reader to \citep{thompson_data_2020,carefl} for a more thorough discussion of the dataset. 

\begin{table}[!t]
    \centering
    \begin{tabular}{l l l}
    \toprule
         Algorithm & Median Abs. Error & Mean Abs. Error\\
         \midrule
         \DCM & $\textbf{0.5981}\pm\textbf{9.5e-3}$ & $\textbf{0.5779}\pm\textbf{1.9e-3}$\\
         CAREFL & $0.5983\pm\text{3.0e-2}$ & $0.6004\pm\text{2.2e-2}$  \\
         ANM & $0.6746\pm\text{4.8e-8}$ & $0.6498\pm\text{1.4e-6}$ \\
         Linear SCM & $0.6045\pm \text{0}$ & $0.6042\pm0$ \\
         \bottomrule
    \end{tabular}
        \caption{Performances for interventional predictions on the fMRI dataset, of the form median/mean absolute error$\pm$standard deviation, using the mean over 10 random seeds. We do not include VACA due to implementation difficulties. The results for CAREFL, ANM, and Linear SCM  are consistent with those observed by~\citep{carefl}. We note that the Linear SCM has zero standard deviation, as the ridge regression model does not vary with the random seed.}
    \label{tab: fmri results}
\end{table}

In Table \ref{tab: fmri results}, we note that the difference in performance is more minor than our synthetic results. We believe this is due to two reasons. Firstly, the data seems inherently close to linear, as exhibited by the relatively similar performance with the standard ridge regression model (Linear SCM). Secondly, we only have a single ground truth interventional value instead of multiple samples from the interventional distribution. As a result, we can only compute the absolute error based on this single value, rather than evaluating the maximum mean discrepancy (MMD) between the true and predicted interventional distributions. 
Specifically, in the above table, we compute the absolute error between the model prediction and the interventional sample for each of the 14 patients and report the mean/median. The availability of a single interventional introduces a possibly large amount of irreducible error, artificially inflating the error values. For more details on the error inflation, see \cref{subsec: fmri experiment explanation}.

\section{Concluding Remarks} 

We demonstrate that diffusion models, in particular, the DDIM formulation (which allows for unique encoding and decoding) provide a flexible and practical framework for approximating interventions (do-operator) and counterfactual (abduction-action-prediction) steps. Our approach, \DCM, is applicable independent of the DAG structure. We find that empirically \DCM outperforms competing methods in all three causal settings, observational, interventional, and counterfactual queries, across various classes of structural equations and graphs.  

While not in scope of this paper, the proposed DCM approach can also be naturally extended to any setting where diffusion models are applicable like categorical data, images, etc. For higher dimensional spaces, we believe DCMs should scale nicely, as diffusion models are typically deployed for high-dimensional image settings and exhibit SOTA performance. Furthermore, we may leverage many of the optimization and implementation tricks, as diffusion models are a very active field of research.

The proposed method does come with certain limitations. For example, as with all the previously mentioned related approaches, \DCM precludes unobserved confounding. The theoretical analyses require assumptions, not all of which are easy to test. However, our practical results suggest that \DCM provides competitive empirical performance, even when some assumptions needed for our theoretical guarantees are violated. 






\appendix



\section{Missing Details from Section~\ref{sec: theory}}\label{app: proofs}
\textbf{Notation.} For two sets $\cX,\cY$ a map $f:\cX\mapsto \cY$, and a set $S\subset \cX$, we define $f(S) = \{f(x): x\in S\}$. For $x\in\cX$, we define $x+S=\{x+x': x'\in\cX\}$. For a random variable $X$, define $p_X(x)$ as the probability density function (PDF) at $x$.  We use p.d. to denote positive definite matrices and $Jf \vert_x$ to denote the Jacobian of $f$ evaluated at $x$. For a function with
two inputs $f(\cdot, \cdot)$, we define $f_x(Y) := f(x, Y )$ and $f_y(X) := f (X, y)$.

\begin{lemma}\label{lemma: translation}
For $\cU,\cZ\subset \R$, consider a family of invertible functions $q_{x_\pa}:\cU\to\cZ$ for $x_\pa \in\cX_\pa \subset \R^d$, then $\frac{dq_{x_\pa}}{du}(q_{x_\pa}^{-1}(z))=c(z)$ for all $x_\pa \in \cX_\pa$ if and only if $q_{x_\pa}$ can be expressed as
\[q_{x_\pa}(u)=q(u+r(x_\pa))\]
for some function $r$ and invertible $q$.
\end{lemma}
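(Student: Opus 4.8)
The plan is to recognize the stated derivative condition as an \emph{autonomous} ordinary differential equation that is shared by every member of the family, and then to exploit the fact that solutions of a given autonomous ODE differ only by a translation of the argument. I would prove the two implications separately, treating the forward ($\Rightarrow$) direction as the substantive one.

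For the forward direction, the first step is to reparametrize the hypothesis. Writing $u=q_x^{-1}(z)$, equivalently $z=q_x(u)$, the condition $\frac{dq_x}{du}(q_x^{-1}(z))=c(z)$ becomes
\begin{equation*}
\frac{dq_x}{du}(u)=c\bigl(q_x(u)\bigr)\qquad\text{for all } u\in\cU,
\end{equation*}
so every $q_x$ solves the same autonomous ODE $w'=c(w)$, with a vector field $c$ that does not depend on $x$. As in the main proof I would assume without loss of generality that the $q_x$ are increasing, so that $c>0$ where it is defined (this matches the positivity of the Jacobian, i.e.\ the reciprocal density, in the application). The next step is separation of variables: let $G$ be an antiderivative of $1/c$ on $\cZ$, which is well defined and strictly increasing because $c>0$. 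Differentiating the composition gives
\begin{equation*}
\frac{d}{du}\,G\bigl(q_x(u)\bigr)=G'\bigl(q_x(u)\bigr)\,q_x'(u)=\frac{c(q_x(u))}{c(q_x(u))}=1,
\end{equation*}
hence $G(q_x(u))=u+r(x)$ for some constant of integration $r(x)$ depending only on $x$. Since $G$ is strictly monotone it is invertible; setting $q\coloneqq G^{-1}$ yields $q_x(u)=q(u+r(x))$, the desired form, with $q$ invertible.

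The reverse direction ($\Leftarrow$) is a direct computation. Given $q_x(u)=q(u+r(x))$, the chain rule gives $\frac{dq_x}{du}(u)=q'(u+r(x))$, while inverting $z=q(u+r(x))$ gives $q_x^{-1}(z)=q^{-1}(z)-r(x)$. Substituting,
\begin{equation*}
\frac{dq_x}{du}\bigl(q_x^{-1}(z)\bigr)=q'\bigl(q_x^{-1}(z)+r(x)\bigr)=q'\bigl(q^{-1}(z)\bigr),
\end{equation*}
which is independent of $x$; taking $c(z)=q'(q^{-1}(z))$ completes this direction.

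The main obstacle is the forward direction, and specifically justifying the division by $c$: I need $c$ to be strictly positive and finite so that $1/c$ admits a well-defined antiderivative $G$ and the separation-of-variables step is legitimate. This is exactly where invertibility and monotonicity of the $q_x$ (equivalently, positivity and finiteness of the Jacobian in the application) are essential, since invertibility plus differentiability on an interval forces strict monotonicity and a common sign for the shared $c$. A secondary point to handle carefully is that $c(z)$ need only be defined on the common part of the ranges of the $q_x$; since the lemma is invoked before the ranges are shown to coincide, I would fix $G$ as an antiderivative of $1/c$ on the relevant codomain and note that the integration-constant argument applies on each $q_x$'s range separately, so that $r(x)$ genuinely emerges as a function of $x$ alone.
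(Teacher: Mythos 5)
Your proof is correct, and at its core it rests on the same fact as the paper's proof: every inverse $q_x^{-1}$ is an antiderivative of $1/c$, so the inverses differ only by constants in $x$. The packaging differs, though. The paper never constructs an antiderivative of $1/c$ from scratch: it fixes a reference $x_0$, notes via the inverse function theorem that $\frac{d}{dz}q_x^{-1}(z)=1/c(z)$ for every $x$, concludes by the mean value theorem that $q_x^{-1}=q_{x_0}^{-1}-r(x)$, and takes $q=q_{x_0}$. You instead read the hypothesis as the autonomous ODE $q_x'=c(q_x)$ and integrate it by separation of variables, introducing an abstract antiderivative $G$ of $1/c$ and ending with $q=G^{-1}$; since any such $G$ equals $q_{x_0}^{-1}$ up to an additive constant, the two constructions agree up to a shift. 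The one point where your route demands extra care is the existence of $G$: positivity of $c$ alone does not guarantee that $1/c$ admits an antiderivative (the hypothesis gives no continuity of $c$, and derivatives need not be continuous), so ``let $G$ be an antiderivative of $1/c$'' requires justification. The clean fix is exactly the paper's move: take $G=q_{x_0}^{-1}$ for a fixed $x_0$, which is an antiderivative of $1/c$ by hypothesis; with that substitution your argument and the paper's become literally the same. Your reverse direction is identical to the paper's. Finally, your closing caveats --- deriving positivity of $c$ from monotonicity, and worrying about whether the ranges of the $q_x$ coincide --- are sensible and in fact slightly more careful than the paper, which implicitly treats all $q_x$ as bijections onto a common codomain $\cZ$ even though, in the application inside Theorem~\ref{thm: cf theory}, equality of the ranges is only established after the lemma is invoked.
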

\begin{proof}
First for the reverse direction, we may assume $q_{x_\pa}(u)=q(u+r(x_\pa))$. Then 
\begin{align*}
    \frac{dq_{x_\pa}}{du} (u)&=\frac{dq}{du} (u+r(x_\pa)).
\end{align*}
Now plugging in $u = q^{-1}_{x_\pa}(z) =q^{-1}(z)-r(x_\pa)$,
\begin{align*}
    \frac{dq_{x_\pa}}{du}(q_{x_\pa}^{-1}(z))=\frac{dq}{du} (q^{-1}(z)-r(x_\pa)+r(x_\pa))=\frac{dq}{du} (q^{-1}(z))=c(z).
\end{align*}
Therefore $\frac{dq_{x_\pa}}{du}(q_{x_\pa}^{-1}(z))$ is a function of $z$.\\

For the forward direction, assume $\frac{dq_{x_\pa}}{du}(q_{x_\pa}^{-1}(z))=c(z)$. Define $s_{x_\pa}: \cZ\to \cU$ to be the inverse of $q_{x_\pa}$. By the inverse function theorem and by assumption.
\begin{align*}
   \frac{d s_{x_\pa}}{dz}(z)= \frac{d q_{x_\pa}^{-1}}{dz}(z)=\frac{1}{\frac{d q_{x_\pa}}{du}(q_{x_\pa}^{-1}(z))}=\frac{1}{c(z)}
\end{align*}

for all $x_\pa$. Since the derivatives of $s_{x_\pa}$ are equal for all $x_\pa$, by the mean value theorem, all $s_{x_\pa}$ are additive shifts of each other. Without loss of generality, we may consider an arbitrary fixed ${x_\pa}_0\in \cX_\pa$ and reparametrize $s_{x_\pa}$ as
\[s_{x_\pa}(z)=s_{{x_\pa}_0}(z)-r(x_\pa).\]
Let $u=s_{x_\pa}(z)$. Then we have
\begin{gather*}
    s_{{x_\pa}_0}(z) = u + r(x_\pa)\\
    q_{{x_\pa}}(u)=z = q_{{x_\pa}_0}(u+r(x_\pa)),
\end{gather*}
and we have the desired representation by choosing $q = q_{{x_\pa}_0}$.
\end{proof}

\first*
\begin{proof}
First, we show that $g(X,X_{\pa})=g(f(X_{\pa},U),X_{\pa})$ is solely a function of $U$.

Since continuity and invertibility imply strict monotonicity, without loss of generality, assume $g$ is an strictly increasing function (if not, we may replace $g$ with $-g$ and use $h(-Z,X_{\pa})$). By properties of the composition of functions, $q_{x_\pa}(U)\coloneqq g(f(x_{\pa},U),x_{\pa})$ is also  differentiable and strictly increasing with respect to $U$. Also, because of strict monotonicity it is also invertible. 

By the assumption that the encoding $Z$ is independent of $X_{\pa}$,
\begin{align}
    Z=q_{x_\pa}(U) \indep X_{\pa}.
\end{align}
Therefore the conditional distribution of $Z$ does not depend on $X_{\pa}$. Using the assumption that $U\indep X_{\pa}$, for all $x_{\pa}\in \cX_{\pa}$ and $z$ in the support of $Z$, by the change of density formula,
\begin{align}\label{eq: pdf of Z}
    p_Z(z)=\frac{p_U(q_{x_\pa}^{-1}(z))}{\left\vert \frac{dq_{x_\pa}}{du}(q_{x_\pa}^{-1}(z))\right\vert}=\frac{\mathbbm{1}\{q_{x_\pa}^{-1}(z)\in[0,1]\}}{ \frac{dq_{x_\pa}}{du}(q_{x_\pa}^{-1}(z))}=c_1(z).
\end{align}
The numerator follows from the fact that the noise is uniformly distributed. The term $\frac{dq_{x_\pa}}{du}(q_{x_\pa}^{-1}(z))$ is nonnegative since $q_{x_\pa}$ is increasing. Furthermore, since $p_Z(z)>0$, the numerator in Eq.\ \ref{eq: pdf of Z} is always equal to $1$ and the denominator must not depend on $X_{\pa}$,
\[\frac{dq_{x_\pa}}{du}(q_{x_\pa}^{-1}(z))=c_2(z)\]
for some function $c_2$. From Lemma~\ref{lemma: translation} (by replacing $a$ by $x_\pa$), we may express 
\begin{align}\label{eq: q decomp}
    q_{x_\pa}(u)=q(u+r(x_\pa))
\end{align}
for an invertible function $q$.

Next, since $Z\indep X_{\pa}$, the support of $Z$ does not depend on $X_{\pa}$, equivalently the ranges of $q_{x_1}$ and $q_{x_2}$ are equal for all $x_1,x_2\in\cX_{\pa}$,
\begin{align}
    q_{x_1}([0,1])&=q_{x_2}([0,1]).
\end{align}
Applying Eq.\ \ref{eq: q decomp} and the invertibility of $q$,
\begin{align*}
    q([0,1]+r(x_1))&=q([0,1]+r(x_2))\\
    [0,1]+r(x_1)&=[0,1]+r(x_2)\\ 
    [r(x_1),r(x_1)+1]&=[r(x_2),r(x_2)+1]
\end{align*}

Since this holds for all $x_1,x_2\in\cX_{\pa}$, we have $r(x_\pa)$ is a constant function, or $r(x_\pa)\equiv r$. Thus let $\tilde q$ be $\tilde q(u)=q(u+r) = q_{x_\pa}(u)$, which is solely a function of $U$ for all $x_\pa$. For all $x_\pa$,
\begin{align}\label{eq: reparametrization only u}
g(f(x_\pa,U),x_\pa)) = q_{x_\pa}(U) = \tilde q(U) \implies g(f(X_{\pa},U),X_{\pa})=\tilde q(U),
\end{align}
for an invertible function $\tilde q$.
This completes the proof.
\end{proof}

\iden*
\begin{proof}
For the intervention $\doo(X_{\pa}:=\gamma)$, the true counterfactual outcome is  $x^\cf:=f(\gamma,u)$. By assumption, $h(g(x^\cf,x_\pa),x_\pa) = x^\cf$.
Now since Eq.\ \ref{eq: reparametrization only u} holds true for all $X_{\pa}$ and $U$, it also holds for the factual and counterfactual samples. We have,
\begin{align*}
    g(x,x_\pa)=g(f(x_\pa,u),x_\pa)=\tilde q(u)=g(f(\gamma,u),\gamma)=g(x^\cf,\gamma).
\end{align*}
 Therefore, the counterfactual estimate produced by the encoder-decoder model
\[h(g(x,x_\pa),\gamma) = h(g(x^\cf,\gamma),\gamma) = x^\cf.\]
This completes the proof.
\end{proof}

\est*
\begin{proof}
For the intervention $\doo(X_{\pa}:=\gamma)$, the counterfactual outcome is  $x^\cf:=f(\gamma,u)$. Since Eq.\ \ref{eq: reparametrization only u} holds true for all $X_{\pa}$ and $U$, it also holds for the factual and counterfactual samples. We have,
\begin{align}\label{eq: tilde q(u) equivalence}
    g(x,x_\pa)=g(f(x_\pa,u),x_\pa)=\tilde q(u)=g(f(\gamma,u),\gamma)=g(x^\cf,\gamma).
\end{align}

By the assumption on the reconstruction error of the encoder-decoder and \ref{eq: tilde q(u) equivalence},
\begin{align}
    d(h(g(x^\cf,\gamma),\gamma),x^\cf) &\le \tau\\
    d(h(g(x,x_\pa),\gamma),x^\cf) &\le \tau.
\end{align}
\end{proof}

We now discuss a lemma that is an extension of~\cref{lemma: translation} to higher dimensions.
\begin{lemma}\label{lemma: multi translation}
For $\cU,\cZ\subset \R^m$, consider a family of invertible functions $q_{x_\pa}:\cU\to\cZ$ for ${x_\pa}\in\cX_\pa\subset \R^d$, if $Jq_{x_\pa}(q_{x_\pa}^{-1}(z))=c A$ for all ${x_\pa}\in \cX_\pa$ then $q_{x_\pa}$ can be expressed as
\[q_{x_\pa}(u)=q(u+r({x_\pa}))\]
for some function $r$ and invertible $q$.
\end{lemma}
\begin{proof} Assume $Jq_{x_\pa} (q_{x_\pa}^{-1}(z))=c A$. By the inverse function theorem,
\begin{align}
    Jq_{x_\pa}^{-1}\vert_z=\left(Jq_{x_\pa}\vert_{q_{x_\pa}^{-1}(z)}\right)^{-1}.
\end{align}

Define $s_{x_\pa}: \cZ\to \cU$ to be the inverse of $q_{x_\pa}$. By assumption
\[\det Jq_{x_\pa} \vert_{q_{x_\pa}^{-1}(z)}=\det Jq_{x_\pa}^{-1}\vert_z=\det Js_{x_\pa} \vert_z=c A\]
for all ${x_\pa}$ and a constant $c$ and orthgonal matrix $A$. Since the Jacobian of $s_{x_\pa}$ is a scaled orthogonal matrix, $s_{x_\pa}$ is a conformal function. Therefore by Liouville's theorem, $s_{x_\pa}$ is a M\"{o}bius function~\citep{blair2000inversion}, which implies that 
\begin{align}
s_{x_\pa}(z)=b_{x_\pa}+\alpha_{x_\pa} A_{x_\pa}(z-a_{x_\pa})/\|z-a_{x_\pa}\|^\ep,
\end{align}
where $A_{x_\pa}$ is an orthogonal matrix, $\ep\in\{0,2\}$, $a_{x_\pa}\in \R^m$, and $\alpha_{x_\pa}\in \R$. The Jacobian of $s_{x_\pa}$ is equal to $cA$ by assumption
\begin{align*}
    \left. J s_{x_\pa} \right \vert_{z} = \frac{\alpha_{x_\pa} A_{x_\pa}}{\|z-a_{x_\pa}\|^\ep} \left(I-\ep \frac{(z-a_{x_\pa})(z-a_{x_\pa})^T}{\|z-a_{x_\pa}\|^2}\right) = cA.
\end{align*}

This imposes constraints on variables $\alpha$, $a$, and $\ep$. Choose $z$ such that $z-a_{x_\pa}=kv$ for a unit vector $v$ and multiply by $A_{x_\pa}^{-1}$,
\begin{align*}
cA A_{x_\pa}^{-1} &= \frac{\alpha_{x_\pa}}{\|kv\|^\ep} \left(I-\ep \frac{k^2 vv^T}{k^2 \|v\|^2}\right)\\ 
I&=\ep vv^T +\left(\frac{ck^\ep}{\alpha_{x_\pa}}\right)A A_{x_\pa}^{-1}.
\end{align*}
If $\ep=2$, choosing different values of $k$, implying different values of $z$, results in varying values of on the right hand side, which should be the constant identity matrix. Therefore we must have $\ep=0$. This also implies that $\alpha_{x_\pa}=c$ and $A=A_{x_\pa}$. This gives the further parametrization
\[s_{x_\pa}(z)=b_{x_\pa}-c A (z-a_{x_\pa})=b'({x_\pa})+cAz\]
where $b'({x_\pa})=b_{x_\pa}-c A a_{x_\pa}$.
 
Without of loss of generality, we may consider an arbitrary fixed ${x_\pa}_0\in \cX_\pa$,
\[s_{{x_\pa}_0}(z)-s_{{x_\pa}}(z)=r({x_\pa})\coloneqq b'({x_\pa}_0)-b'({x_\pa}).\]
Let $u=s_{{x_\pa}}(z)$. Then we have
\begin{gather*}
    s_{{x_\pa}_0}(z) = u + r({x_\pa})\\
    q_{{x_\pa}}(u)=z = q_{{x_\pa}_0}(u+r({x_\pa})),
\end{gather*}
and we have the desired representation by choosing $q = q_{{x_\pa}_0}$.
\end{proof}

\sec*
\begin{proof}
We may show that $g(X,X_\pa)=g(f(X_\pa,U),X_\pa)$ is solely a function of $U$. 

By properties of composition of functions, $q_{x_\pa}(U)\coloneqq g(f({x_\pa},U),{x_\pa})$ is also invertible, differentiable. Since $Jf_{x_\pa}$ and $Jg_{x_\pa}$ are p.d. and $Jq_{x_\pa}=Jf_{x_\pa}Jg_{x_\pa}$, then $Jq_{x_\pa}$ is p.d. for all ${x_\pa}\in\cX_\pa$ as well.

By the assumption that the encoding $Z$ is independent of $X_\pa$,
\begin{align}
    Z=q_{X_\pa}(U) \indep X_\pa.
\end{align}
Therefore the conditional distribution of $Z$ does not depend on $X_\pa$. Using the assumption that $U\indep X_\pa$, for all ${x_\pa}\in \cX_\pa$ and $z$ in the support of $Z$, by the change of density formula,
\begin{align}\label{eq: multi pdf of Z}
    p_Z(z)=\frac{p_U(q_{x_\pa}^{-1}(z))}{\left\vert \det  Jq_{x_\pa}\vert_{q_{x_\pa}^{-1}(z)}\right\vert}=\frac{2^{-m}\mathbbm{1}\{q_{x_\pa}^{-1}(z)\in[0,1]^m\}}{ \det Jq_{x_\pa}\vert_{q_{x_\pa}^{-1}(z)}}=c_1(z).
\end{align}
The numerator follows from the fact that the noise is uniformly distributed. The determinant of the Jacobian term $Jq_{x_\pa}(q_{x_\pa}^{-1}(z))$ is nonnegative since $Jq_{x_\pa}$ is p.d. Furthermore, since $p_Z(z)>0$, the numerator in Eq.\ \ref{eq: multi pdf of Z} is always equal to $2^{-m}$ and the denominator must not depend on $X_\pa$,
\begin{align}\label{eq: det constraint}
\det Jq_{x_\pa}\vert_{q_{x_\pa}^{-1}(z)}=c_2(z)    
\end{align}
for some function $c_2$. From our assumption, $J q_{x_\pa} \vert_{q_{x_\pa}^{-1}(z)} = c({x_\pa})A$ for an orthogonal matrix $A$ for all $z$. Applying this to Eq.\ \ref{eq: det constraint},
\begin{align*}
\det J q_{x_\pa} \vert_{q_{x_\pa}^{-1}(z)} = \det c({x_\pa})A = c({x_\pa}) = c_2(z),
\end{align*}
which implies $c({x_\pa})\equiv c$ is a constant function, or $J q_{x_\pa} \vert_{q_{x_\pa}^{-1}(z)} = cA$. By Lemma \ref{lemma: multi translation}, we may express $q_{x_\pa}(u)$ as
\begin{align}\label{eq: multi q decomp}
    q_{x_\pa}(u)=q(u+r({x_\pa}))
\end{align}
for an invertible function $q$.

Next, since $Z\indep X_\pa$, the support of $Z$ does not depend on $X_\pa$, equivalently the ranges of $q_{{x_\pa}_1}$ and $q_{{x_\pa}_2}$ are equal for all $x_1,x_2\in\cX_\pa$,
\begin{align}
    q_{x_1}([0,1]^m)&=q_{x_2}([0,1]^m).
\end{align}
Applying Eq.\ \ref{eq: multi q decomp} and the invertibility of $q$,
\begin{align*}
    q([0,1]^m+r(x_1))&=q([0,1]^m+r(x_2))\\
    [0,1]^m+r(x_1)&=[0,1]^m+r(x_2)\\
    [r(x_1),r(x_1)+1]^m&=[r(x_2),r(x_2)+1]^m.
\end{align*}

Since this holds for all $x_1,x_2\in\cX_\pa$, we have $r(x)$ is a constant, or $r(x)\equiv r$. Thus let $\tilde q$ be $\tilde q(u)=q(u+r) = q_{x_\pa}(u)$, which is solely a function of $U$ for all ${x_\pa}$. For all ${x_\pa}$,
\begin{align}\label{eq: multi reparametrization only u}
g(f({x_\pa},U,{x_\pa})) = q_{x_\pa}(U) = \tilde q(U) \implies g(f(X_\pa,U),X_\pa)=\tilde q(U).
\end{align}

This completes the proof.
\end{proof}

\begin{restatable}{corollary}{multiden} \label{cor: multiden}
Assume the conditions of~\cref{thm: multivariate cf theory}. Furthermore, assume the encoder-decoder model pair $(g,h)$ satisfies
 \begin{align}\label{eq: perfect}
 h(g(X,X_{\pa}),X_{\pa}) = X.
 \end{align}
Consider a factual sample pair $(x,x_\pa)$  where $x:=f(x_\pa,u)$ and an intervention $\doo(X_{\pa}:=\gamma)$. Then, the given by $h(g(x,x_\pa),\gamma)$ matches the true counterfactual outcome $x^\cf := f(\gamma,u)$.
\end{restatable}

\begin{restatable}{corollary}{multest} \label{cor: multest}
Let $\gamma \ge 0$. Assume the conditions of~\cref{thm: multivariate cf theory}.  Furthermore, assume the encoder-decoder model pair $(g,h)$ under some metric $d$ (e.g., $\|\cdot \|_2$), has reconstruction error less than $\tau$, 
 \begin{align}\label{eq: reconstruction error}
 d(h(g(X,X_{\pa}),X_{\pa}),X)\le \tau.
 \end{align}
Consider a factual sample pair $(x,x_\pa)$  where $x:=f(x_\pa,u)$ and an intervention $\doo(X_{\pa}:=\gamma)$. Then, the error between the true counterfactual $x^\cf := f(\gamma,u)$ and counterfactual estimate $h(g(x,x_\pa),\gamma)$ is at most $\tau$, i.e., $d(h(g(x,x_\pa),\gamma), x^\cf)\le \tau$.
\end{restatable}

\section{Testing Independence between Parents and Encodings}\label{app: independence exp}

We empirically evaluate the dependence between the encoding and parent values. We consider a bivariate nonlinear SCM $X_1 \rightarrow X_2$ where $X_2=f(X_1,U_2) = X_1^2+U_2$ and $X_1$ and $U_2$ are independently sampled from a standard normal distribution. We evaluate the HSIC between $X_1$ and the encoding of $X_2$. We fit our model on $n=5000$ samples and evaluate the HSIC score on $1000$ test samples from the same distribution. We compute a p-value using a kernel based independence test and compare our performance to ANM, a correctly specified model in this setting \citep{HSIC}. We perform this experiment $100$ times. Given true independence, we expect the p-values to follow a uniform distribution. In the table below, we show some summary statistics of the p-values from the $100$ trials, with the last row representing the expected values with true uniform p-values (which happens under the null hypothesis).

\begin{table}[!ht]
    \centering
    \begin{tabular}{ l r r r r r r }
\toprule
& Mean & Std. Dev & 10\% Quantile & 90\% Quantile & Min & Max \\
\midrule 
DCM & 0.196 & 0.207 & 0.004&  0.515& 6\text{e-}6& 0.947 \\
ANM & 0.419 & 0.255 & 0.092  & 0.774 & 3\text{e-}5& 0.894\\
True Uniform (null) & 0.500 & 0.288 & 0.100  & 0.900 & 1\text{e-}2& 0.990\\
\bottomrule
    \end{tabular}
    \label{p-value table}
    \vspace{1ex}
\caption{Table of p-value descriptions.}
\end{table}
We provide p-values from the correctly specified ANM approach which are close to a uniform distribution, demonstrating that it is possible to have encodings that are close to independent. Although the p-values produced by our DCM approach are not completely uniform, the encodings do not to consistently reject the null hypothesis of independence. These results demonstrate that it is empirically possible to obtain encodings independent of the parent variables. We further note that the ANM is correctly specified in this setting and DCM is relatively competitive despite being far more general.

\section{Missing Experimental Details}\label{app: exp details}
In this section, we provide missing details from Section~\ref{sec: exp}.

\subsection{Model Hyperparameters} \label{app: hyperparam}
For all experiments in our evaluation, we hold the model hyperparameters constant. For \DCM, we use $T=100$ total time steps with a linear $\beta_t$ schedule interpolating between 1e-4 and 0.1, or $\beta_t=\left(0.1-10^{-4}\right)\frac{t-1}{T-1}+10^{-4}$ for $t\in[T]$. To incorporate the parents' values and time step $t$, we simply concatenate the parent values and $t/T$ as input to the $\ep_\theta$ model. We found that using the popular cosine schedule \citep{cosine_schedule} resulted in worse empirical performance, as well as using a positional encoding for the time $t$. We believe the drop in performance from the positional encoding is due to the low dimensionality of the problem since the dimension of the positional encoder would dominate the dimension of the other inputs.

We also evaluated using classifier-free guidance (CFG)~\citep{ho2021classifierfree} to improve the reliance on the parent values, however, we found this also decreased performance. We provide a plausible explanation that can be explained through Theorem~\ref{thm: cf theory}. With Theorem~\ref{thm: cf theory}, we would like our encoding $g(Y,X)$ to be independent of $X$, however using a CFG encoding $(1+w)g(Y,X)-wg(Y,0)$ would only serve to increase the dependence of $g(Y,X)$ on $X$, which is counterproductive to our objective.

For VACA, we use the default implementation\footnote{\url{https://github.com/psanch21/VACA}}, training for 500 epochs, with a learning rate of 0.005, and the encoder and decoder have hidden dimensions of size $[8,8]$ and $[16]$ respectively, a latent vector dimension of $4$, and a parent dropout rate of 0.2.

For CAREFL, we also use the default implementation\footnote{\url{https://github.com/piomonti/carefl/}} with the neural spline autoregressive flows \citep{neural_spline_flows}, training for 500 epochs with a learning rate of 0.005, four flows, and ten hidden units.

For ANM, we also use the default implementation to select a regression model. Given a set of fitted regression models, the ANM chooses the model with the lowest root mean squared error averaged over splits of the data. The ANM considers the following regressor models: linear, ridge, LASSO, elastic net, random forest, histogram gradient boosting, support vector, extra trees, k-NN, and AdaBoost.

\subsection{Details about the Additive Noise Model (ANM)}\label{app: anm}
For a given node $X_i$ with parents $X_{\pa_i}$, consider fitting a regression model $\hat f_i$ where $\hat f_i(X_{\pa_i})\approx X_i$. Using this regression model and the training dataset is sufficient for generating samples from the observational and interventional distribution, as well as computing counterfactuals.

\textbf{Observational/Interventional Samples.} 
Samples are constructed in topological order.  For intervened nodes, the sampled value is always the intervened value.
Non-intervened root nodes in the SCM are sampled from the empirical distribution of the training set. A new sample for $X_i$ is generated by sampling the parent value $X_{\pa_i}$ inductively and sampling $\hat U$ from the empirical residual distribution, and outputting $\hat{X}_i:=\hat f_i(X_{\pa_i})+\hat U$. 


\textbf{Counterfactual Estimation.}
For a factual observation $x^\fact$ and interventions on nodes $\cI$ with values $\gamma$, the counterfactual estimate only differs from the factual estimate for all nodes that are intervened or downstream from an intervened node. We proceed in topological order. For each intervened node $i$, $\hat{x}_i^\cf:=\gamma_i$. For each non-intervened node $i$ downstream from an intervened node, define $\hat u_i^\fact:=x_i^\fact-\hat f_i(x_{\pa_i}^\fact)$, the residual and estimated noise for the factual sample. Let $\hat{x}_{\pa_i}^{\cf}$ be counterfactual estimates of the parents of $X_i$. Then $\hat{x}_i^\cf := \hat f_i(\hat{x}_{\pa_i}^\cf)+\hat u_i^\fact$.

Therefore, for counterfactual queries, if the true functional equation $f_i$ is an additive noise model, then if $\hat f_i\approx f_i$, the regression model will have low counterfactual error. In fact, if $\hat f_i\equiv f_i$, then the regression model will have perfect counterfactual performance.

\subsection{Details about Random Graph Generation}\label{app: rand}
The random graph is comprised of ten nodes. We randomly sample this graph by generating a random upper triangular adjacency matrix where each entry in the upper triangular half is each to $1$ with probability $30\%$. We then check that this graph is comprised of a single connected component (if not, we resample the graph). For a graphical representation, we provide an example in \cref{fig: random}.

\subsection{Query Evaluation Frameworks for Synthetic Data Experiments} \label{app: eval_frame}
\textbf{Observational Evaluation.}
We generate $1000$ samples from both the fitted and true observational distribution and report the MMD between the two. Since \DCM and the ANM use the empirical distribution for root nodes, we only take the MMD between nonroot nodes.

\textbf{Interventional Evaluation.}
We consider interventions of individual nodes. For an intervention node $i$, we choose $20$ intervention values $\gamma_1,\ldots,\gamma_{20}$, linearly interpolating between the $10\%$ and $90\%$ quantiles of the marginal distribution of node $i$ to represent realistic interventions. Then for each intervention $\doo(X_i:=\gamma_j)$, we generate $100$ values from the fitted model and true causal model, $\hat X$ and $X^\star$ for the samples from the fitted model and true model respectively. Since the intervention only affects the descendants of node $i$, we subset $\hat X$ and $X^\star$ to include only the descendants of node $i$, and compute the MMD on $\hat X$ and $X^\star$ to obtain a distance $\delta_{i,j}$ between the interventional distribution for the specific node and interventional value. Lastly, we report the mean MMD over all $20$ intervention values and all intervened nodes. For the ladder graph, we choose to intervene on $X_2$ and $X_3$ as these are the farthest nodes upstream and capture the maximum difficulty of the intervention. For the random graph, we randomly select three non-sink nodes to intervene on. A formal description of our interventional evaluation framework is given in Algorithm~\ref{alg:int_exp}.

\begin{algorithm}
 \caption{Evaluation of Interventional Queries}   \label{alg:int_exp} 
\begin{algorithmic}[1]
\FOR{each intervention node $i$}
    \STATE $\gamma_1,\ldots,\gamma_{20}$ linearly interpolate $10\%$ to $90\%$ quantiles of node $i$
    \FOR{each intervention $\gamma_j$ generated above}
        \STATE Intervene $\doo(X_i:=\gamma_j)$
        \STATE Generate $100$ samples $\hat X$ 
        from $\hat M$ and $X^\star$ from $M^\star$ of descendants of $i$
        \STATE $\delta_{i,j}\leftarrow \hat \mmd(\hat X,X^\star)$
    \ENDFOR
\ENDFOR
\STATE Output mean of all $\delta_{i,j}$
\end{algorithmic}
\end{algorithm}

\textbf{Counterfactual Evaluation.}
Similarly to interventional evaluation, we consider interventions of individual nodes and for node $i$,  we choose $20$ intervention values $\gamma_1,\ldots,\gamma_{20}$, linearly interpolating between the $10\%$ and $90\%$ quantiles of the marginal distribution of node $i$ to represent realistic interventions. Then for each intervention $\doo(X_i:=\gamma_j)$, we generate $100$ nonintervened factual samples $x^\fact$, and query for the estimated and true counterfactual values $\hat x^\cf$ and $x^{\cf}$ respectively. Similarly to before, $\hat x^\cf$ and $x^{\cf}$ only differ on the descendants of node $i$, therefore we only consider the subset of the descendants of node $i$. We compute the MSE $\delta_{i,j}$, since the counterfactual estimate and ground truth are point values, giving us an error for a specific node and interventional value. Lastly, we report the mean MSE over all $20$ intervention values and all intervened nodes. We use the same intervention nodes as in the interventional evaluation mentioned above.  A formal description of our counterfactual evaluation framework is given in Algorithm~\ref{alg:count_exp} (Appendix~\ref{app: exp details}).

\begin{algorithm}[!t]
 \caption{Evaluation of Counterfactual Queries}   \label{alg:count_exp} 
\begin{algorithmic}[1]
\FOR{each intervention node $i$}
    \STATE $\gamma_1,\ldots,\gamma_{20}$ linearly interpolate $10\%$ to $90\%$ quantiles of node $i$
    \FOR{each intervention $\gamma_j$ generated above}
        \STATE Generate $100$ factual samples $x^\fact_1,\ldots,x^\fact_{100}$
        \STATE Intervene $\doo(X_i:=\gamma_j)$
        \STATE Using $\{x^\fact\}_{k=1}^{100}$, compute counterfactual estimates $\{\hat x^\cf_k\}_{k=1}^{100}$ and true counterfactuals $\{x^{\cf}_k\}_{k=1}^{100}$  for all descendants of $i$
        \STATE $\delta_{i,j}\leftarrow \mse(\{\hat x^\cf_k\}_{k=1}^{100},\{x^{\cf}_k\}_{k=1}^{100})$
    \ENDFOR
\ENDFOR
\STATE Output mean of all $\delta_{i,j}$
\end{algorithmic}
\end{algorithm}

\subsection{Explanation of Error Inflation in fMRI Experiments}\label{subsec: fmri experiment explanation}
In our fMRI experiments, we compute the absolute error on a single interventional sample and compute the absolute value. As an intuition for why we cannot hope to have errors close to zero and for why the errors are relatively much closer together, consider the following toy problem.

Assume $X_1,\ldots,X_n,Y_1,\ldots,Y_n\simiid \cN(\theta,1)$ and we observe $X_1,\ldots,X_n$ as data. Consider the two following statistical problems:
\begin{CompactEnumerate}
    \item Learn a distribution $\hat D$ such that samples from $\hat D$ and samples $Y_1,\ldots,Y_n$ achieves low MMD.
    \item Learn an estimator that estimates $Y_1$ well under squared error.
\end{CompactEnumerate}

In the first statistical problem, if $\hat D$ is a reasonable estimator, we should expect that more data leads to lower a MMD, for example the error may decay at a $1/n$ rate. We should expect MMD values close to zero, and the magnitude of the performance is directly interpretable.

In the second statistical problem, under squared error, the problem is equivalent to
\[\min_{c} \E_{Y_1\sim \cN(\theta,1)} (Y_1-c)^2=\min_{c}\theta^2+1-2c\theta+c^2.\]
The optimal estimator is the mean $\theta$ and achieves a squared error of $1$. Of course in practice we do not know $\theta$, if we were to use the sample mean of $X_1,\ldots,X_n$, we would have an error of the order $1+1/n$. While we may still compare various estimators, e.g. sample mean, sample median, deep neural network, all the losses will be inflated by $1$, causing the difference in performances to seem much more minute. 

Our synthetic experiments hope to directly estimate the interventional distribution and computes the MMD between samples from the true and model's distributions, implying they are of the first statistical problem. The fMRI real data experiments aim to estimate a single intervention value from a distribution, implying they are of the second statistical problem. We should not expect these results to be very small, and the metric values should all be shifted by an intrinsic irreducible error.

\begin{figure*}[!ht]
    \begin{minipage}{.48\textwidth}
    \centering
    \includegraphics[width=0.99\columnwidth]{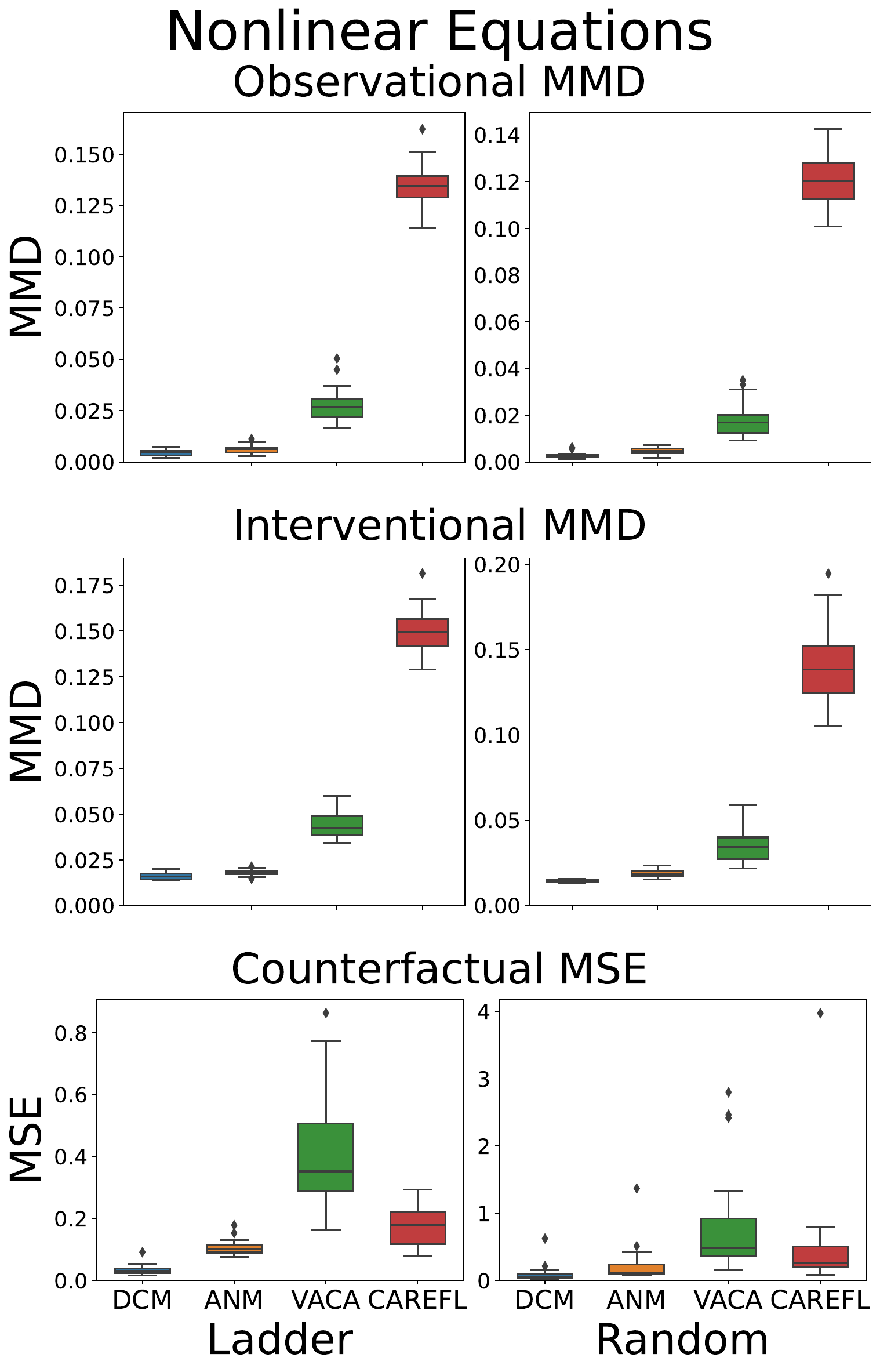}
    \end{minipage}
    \hspace{0.1cm}
    \begin{minipage}{.48\textwidth}
        \centering
    \includegraphics[width=0.99\columnwidth]{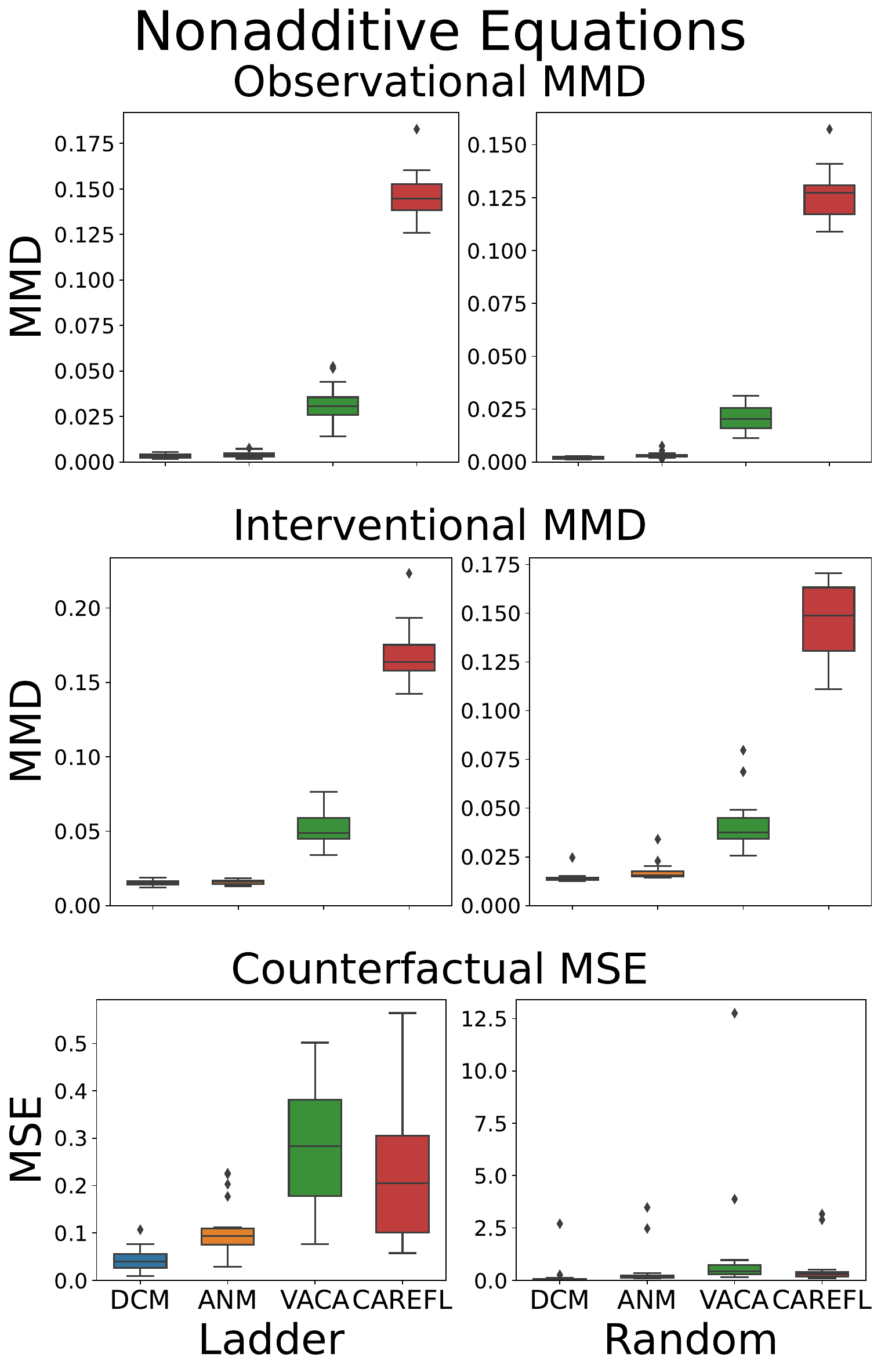}

    \end{minipage}
            \caption{Left: Nonlinear setting (NLIN), Right: Nonadditive setting (NADD). Box plots of observational, interventional, and counterfactual queries of the ladder and random SCMs over $20$ random initializations of the model and training data.}
        \label{fig: box plots ladder random scm}
\end{figure*}

\section{Additional Experiments}  \label{app: add exp}

\subsection{Running Time Experiments}\label{sec: running time}
We present the training times in minutes for one seed on the ladder graph using the default implementation and parameters. For a fair comparison, these are all evaluated on a CPU. 
\begin{table}[!ht]
    \centering
    \begin{tabular}{ l r r r r }
\toprule
& DCM & ANM & VACA & CAREFL\\
\midrule 
Training Time (in minutes) & 15.3 & 4.3 & 142.8 & 110.5\\
\bottomrule
    \end{tabular}
    \label{tab: Training  times}
\caption{Table of training times for a single run.}
\end{table}
Note that ANM is the fastest as it uses standard regression models, and our proposed DCM approach is about 7-9x faster than CAREFL and VACA. The generation (inference) times for all the methods are in the order of 1 second. For VACA and CAREFL, we use the implementation provided by the respective authors.

\subsection{Semi-Synthetic Experiments} \label{app: semi}
To further evaluate the effectiveness of DCM, we explore a semi-synthetic experiment based on the Sachs dataset~\citep{sachs}. We use the real world graph from comprised of 11 nodes\footnote{\url{https://www.bnlearn.com/bnrepository/discrete-small.html\#sachs}}. The graph represents an intricate network of signaling pathways within human T cells. The 11 nodes within this graph each correspond to one of the phosphorylated proteins or phospholipids that were examined in their study. 

For our experiment, we sample data in a semi-synthetic manner. For the root nodes, we sample from the empirical marginal distribution. For non-root nodes, since the ground truth structural equations are unknown, we use a random neural network as the structural equation, as was done in \cref{sec: exp}. We report the performances in \cref{tab: sachs results}. We see that the performance improvements with our \DCM approach corroborate our prior findings.

\begin{table*}[h]
\begin{small}
    \begin{center}
\renewcommand{\arraystretch}{1.4}
\setlength\tabcolsep{5pt}
    \centering
    \begin{tabular}{ccl  rrrr}
\toprule
 &  &   & \DCM  & ANM &  VACA & CAREFL   \\
     \cmidrule(r){4-7}
      \multicolumn{2}{c}{SCM} & Metric   & ($\times 10^{-2}$)   & ($\times 10^{-2}$) & ($\times 10^{-2}$)& ($\times 10^{-2}$)\\
\cmidrule(r){1-7}
\multirow{6}{*}{\rotatebox[origin=c]{90}{ {\textit{Sachs}}}}  &\multirow{3}{*}{\rotatebox[origin=c]{90}{NLIN}} & Obs. MMD   & 0.31$\pm$0.15 & \textbf{0.21$\pm$0.04} & 0.53$\pm$0.24 & 7.30$\pm$0.95 \\
 &  &  Int. MMD   & \textbf{1.25$\pm$0.11} & 1.37$\pm$0.12 & 2.03$\pm$0.36 & 5.77$\pm$0.85 \\
 &  &  CF. MSE   & \textbf{0.72$\pm$0.19} & 4.72$\pm$1.71 & 17.71$\pm$9.23 & 9.59$\pm$2.52 \\
\cline{2-7}
&\multirow{3}{*}{\rotatebox[origin=c]{90}{NADD}} & Obs. MMD   & \textbf{0.18$\pm$0.07} & 0.18$\pm$0.06 & 0.39$\pm$0.24 & 6.10$\pm$1.14 \\
 &  &  Int. MMD   & \textbf{1.42$\pm$0.26} & 1.86$\pm$0.51 & 2.21$\pm$0.98 & 5.29$\pm$2.07 \\
 &  &  CF. MSE   & \textbf{1.99$\pm$2.49} & 4.68$\pm$5.77 & 7.30$\pm$11.57 & 8.30$\pm$8.34 \\
\bottomrule
\end{tabular}
\caption{Mean$\pm$standard deviation of observational, interventional, and counterfactual queries of the Sachs SCM in nonlinear (NLIN) and nonadditive (NADD) settings over $10$ random initializations of the model and training data. The values are multiplied by $100$ for clarity.}
    \label{tab: sachs results}
\end{center}
\end{small}
\end{table*}
\begin{table}[!ht]
\centering
\renewcommand{\arraystretch}{2.5}
\hspace*{-0.5in}
\begin{tabular}{ c l r r}
\toprule
SCM & & Nonlinear Case & Nonadditive Case\\
\midrule
\multirow{2}{*}{\rotatebox[origin=c]{90}{ {\textit{Chain}}}} 

& $f_2(U_2, X_1)$ & $\exp(X_1 / 2) + U_2 / 4$ & $ 1/((U_2+X_1)^2 + 0.5)$ \\
& $f_3 (U_3,X_2)$ &$(X_2 - 5) ^ 3 / 15 + U_3$ &$\sqrt{X_2 + \lvert U_3 \rvert } / (0.1 + X_2)$\\
\midrule
\multirow{2}{*}{\rotatebox[origin=c]{90}{ {\textit{Triangle}}}} 
& $f_2(U_2, X_1)$ &$2 X_1 ^ 2+U_2$ &  $X_1/((U_2 + X_1)^2+1) + U_2/4$\\
& $f_3 (U_3,X_1,X_2)$& $20 / (1 + \exp(-X_2^2 + X_1))+U_3$&$(\lvert U_3\rvert +0.3) (-X_1 + X_2 / 2 + \lvert U_3\rvert /5)^2$\\
\midrule
\multirow{3}{*}{\rotatebox[origin=c]{90}{ {\textit{Diamond}}}} 
& $f_2(U_2, X_1)$ & $X_1 ^ 2 + U_2 / 2$ & $\sqrt{\lvert X_1\rvert} (\lvert U_2\rvert+0.1)/2 + \lvert X_1\rvert + U_2/5$\\
& $f_3 (U_3,X_1,X_2)$& $X_2 ^ 2 - 2 / (1 + \exp(-X_1)) + U_3 / 2$&$1 / (1 + (\lvert U_3\rvert+0.5) \exp(-X_2  + X_1))$\\
& $f_4 (U_4,X_2, X_3)$& $X_3 / (\lvert X_2 + 2\rvert + X_3 + 0.5) + U_4 / 10$&$(X_3 + X_2 + U_4/4-7)^2 - 20$\\
\midrule
\multirow{2}{*}{\rotatebox[origin=c]{90}{ {\textit{Y}}}} 
& $f_3 (U_3,X_1,X_2)$& $ 4/(1+\exp(-X_1 - X_2)) - X_2^2 +  U_3/2$&$ (X_1 - 2X_2 - 2) (\lvert U_3\vert + 0.2)$\\
& $f_4 (U_4,X_3)$& $ 20/(1+\exp(X_3^2/2-X_3)) +  U_4$&$(\cos(X_3)+U_4/2)^2$\\
\bottomrule
\end{tabular}
\caption{The equations defining the data generating process in the NLIN and NADD cases.}
\label{tab:eqns}
\end{table}

\subsection{Additional Synthetic Experiments}  \label{app: more syn exp}

In this section, we present additional experimental evidence showcasing the superior performance of \DCM in addressing various types of causal queries. 
We consider four additional smaller graph structures, which we call the $\chain$, $\tri$, $\dia$, and $\y$ graphs (see Figure~\ref{fig: causal graph diagrams}). 

The exact equations are presented in \cref{tab:eqns}. These functional equations were chosen to balance the signal-to-noise ratio of the covariates and noise to represent realistic settings. Furthermore, these structural equations were chosen \textit{after} hyperparameter selection, meaning we did not tune \DCM's parameters nor tune the structural equations after observing the performance of the models.


Consider a node with value $X$ with parents $X_\pa$ and exogenous noise $U$ where $X_\pa \indep U$, and corresponding functional equation $f$ such that
\[X=f(X_\pa,U).\]
Further assuming the additive noise model, $X:=f_1(X_\pa)+f_2(U)$. In this additive setting, since $X_\pa\indep U$, we have
\[\var[X] = \var[f_1(X_\pa] + \var[f_2(U)].\]
We choose $f_1$ and $f_2$ such that 
\[0.05\le \frac{\var[f_2(U)]}{\var[f_1(X_\pa]}\le 0.5,\]
representing the fact that the ratio of the effect of the noise to the parents is roughly approximate or smaller by an order of magnitude.

For the nonadditive case, we decompose the variance using the law of total variance,
\[\var[X] = \E\,\var[f(X_\pa,U)\mid U] + \var\E[f(X_\pa,U)\mid U].\]
Similarly, we choose the functional equation $f$ such that $f$ satisfies
\[0.05 \le  \frac{\var\E[f(X_\pa,U)\mid U]}{\E\,\var[f(X_\pa,U)\mid U]}\le 0.5\]
For all graphs, $U_i\simiid \cN(0,1)$, and we choose $f$ such that $X_i=U_i$ if $X_i$ is a root node, i.e. $f$ is the identity function. Lastly, we normalize every node $X_i$ such that $\var(X_i)\approx 1$. For the sake of clarity, we omit all normalizing terms in the formulas and omit functional equations for root nodes below.

\textbf{Results.}
For these 4 graph structures ($\chain$, $\tri$, $\dia$, and $\y$), in  Table \ref{tab: small scm results}, we provide the performance of all evaluated models for observational, interventional, and counterfactual queries, averaged over $10$ separate initializations of models and training data, with the lowest value in each row bolded. The values are multiplied by $100$ for clarity. In \cref{fig: box plots small scm}, we show the box plots for the same set of experiments.

The results here are similar to those observed with the larger graph structures in~\Cref{tab: results}. 
\DCM has the lowest error in 7 out of 12 of the nonlinear settings, with the correctly specified ANM having the lowest error in the remaining 5. Furthermore, \DCM and ANM both typically have a lower standard deviation compared to the other competing methods. For the nonadditive settings, \DCM demonstrates the lowest values for all 12 causal queries. 

\begin{table*}[!ht]
\begin{small}
    \begin{center}
\renewcommand{\arraystretch}{1.4}
\setlength\tabcolsep{5pt}
    \centering
    \begin{tabular}{ccl  rrrr}
\toprule
 &  &   & \DCM  & ANM &  VACA & CAREFL   \\
     \cmidrule(r){4-7}
      \multicolumn{2}{c}{SCM} & Metric   & ($\times 10^{-2}$)   & ($\times 10^{-2}$) & ($\times 10^{-2}$)& ($\times 10^{-2}$)\\
\cmidrule(r){1-7}
\multirow{6}{*}{\rotatebox[origin=c]{90}{ {\textit{Chain}}}}  &\multirow{3}{*}{\rotatebox[origin=c]{90}{NLIN}} & Obs. MMD   & 0.27$\pm$0.11 & \textbf{0.19$\pm$0.27} & 1.63$\pm$0.42 & 4.25$\pm$1.12 \\
 &  &  Int. MMD   & 1.71$\pm$0.27 & \textbf{1.70$\pm$0.47} & 10.10$\pm$1.81 & 8.63$\pm$0.52 \\
 &  &  CF. MSE   & \textbf{0.33$\pm$0.16} & 2.43$\pm$2.49 & 25.99$\pm$6.47 & 19.62$\pm$4.01 \\
\cline{2-7}
&\multirow{3}{*}{\rotatebox[origin=c]{90}{NADD}} & Obs. MMD   & \textbf{0.22$\pm$0.16} & 1.51$\pm$0.43 & 1.53$\pm$0.41 & 3.48$\pm$1.06 \\
 &  &  Int. MMD   & \textbf{2.85$\pm$0.45} & 7.34$\pm$0.62 & 10.42$\pm$2.77 & 11.02$\pm$1.32 \\
 &  &  CF. MSE   & \textbf{75.84$\pm$1.65} & 88.56$\pm$2.63 & 98.82$\pm$4.16 & 105.80$\pm$11.04 \\
\hline \hline
\multirow{6}{*}{\rotatebox[origin=c]{90}{ {\textit{Triangle}}}}  &\multirow{3}{*}{\rotatebox[origin=c]{90}{NLIN}} & Obs. MMD   & 0.16$\pm$0.11 & \textbf{0.12$\pm$0.07} & 3.12$\pm$0.93 & 4.64$\pm$1.03 \\
 &  &  Int. MMD   & \textbf{1.50$\pm$0.30} & 3.28$\pm$0.79 & 18.43$\pm$1.72 & 7.08$\pm$0.82 \\
 &  &  CF. MSE   & \textbf{1.12$\pm$0.26} & 9.80$\pm$1.70 & 178.69$\pm$16.45 & 41.85$\pm$19.58 \\
\cline{2-7}
&\multirow{3}{*}{\rotatebox[origin=c]{90}{NADD}} & Obs. MMD   & \textbf{0.25$\pm$0.12} & 0.51$\pm$0.08 & 2.42$\pm$0.48 & 5.12$\pm$1.10 \\
 &  &  Int. MMD   & \textbf{2.81$\pm$0.21} & 5.54$\pm$0.60 & 11.09$\pm$0.85 & 4.17$\pm$0.44 \\
 &  &  CF. MSE   & \textbf{26.28$\pm$6.68} & 97.25$\pm$16.45 & 173.67$\pm$16.28 & 121.99$\pm$31.34 \\
\hline \hline
\multirow{6}{*}{\rotatebox[origin=c]{90}{ {\textit{Y}}}}  &\multirow{3}{*}{\rotatebox[origin=c]{90}{NLIN}} & Obs. MMD   & \textbf{0.11$\pm$0.05} & 0.14$\pm$0.08 & 2.29$\pm$0.69 & 6.82$\pm$0.85 \\
 &  &  Int. MMD   & \textbf{1.23$\pm$0.08} & 1.40$\pm$0.14 & 9.50$\pm$0.96 & 14.97$\pm$1.29 \\
 &  &  CF. MSE   & \textbf{0.28$\pm$0.25} & 1.22$\pm$0.27 & 28.79$\pm$4.02 & 27.85$\pm$5.33 \\
\cline{2-7}
&\multirow{3}{*}{\rotatebox[origin=c]{90}{NADD}} & Obs. MMD   & \textbf{0.21$\pm$0.15} & 1.00$\pm$0.19 & 1.51$\pm$0.44 & 3.39$\pm$0.58 \\
 &  &  Int. MMD   & \textbf{1.54$\pm$0.23} & 5.62$\pm$0.31 & 5.37$\pm$0.72 & 6.51$\pm$0.40 \\
 &  &  CF. MSE   & \textbf{33.45$\pm$2.06} & 47.55$\pm$2.56 & 60.67$\pm$3.24 & 52.41$\pm$6.69 \\
\hline \hline
\multirow{6}{*}{\rotatebox[origin=c]{90}{ {\textit{Diamond}}}}  &\multirow{3}{*}{\rotatebox[origin=c]{90}{NLIN}} & Obs. MMD   & 0.22$\pm$0.10 & \textbf{0.13$\pm$0.07} & 2.77$\pm$0.45 & 8.28$\pm$1.29 \\
 &  &  Int. MMD   & 3.21$\pm$0.62 & \textbf{2.56$\pm$0.31} & 25.30$\pm$1.39 & 18.23$\pm$3.01 \\
 &  &  CF. MSE   & \textbf{14.74$\pm$6.09} & 32.02$\pm$37.74 & 138.65$\pm$12.33 & 607.62$\pm$241.21 \\
\cline{2-7}
&\multirow{3}{*}{\rotatebox[origin=c]{90}{NADD}} & Obs. MMD   & \textbf{0.25$\pm$0.18} & 0.28$\pm$0.09 & 2.36$\pm$0.44 & 5.50$\pm$0.81 \\
 &  &  Int. MMD   & \textbf{1.88$\pm$0.23} & 4.40$\pm$0.52 & 12.54$\pm$0.89 & 16.34$\pm$1.72 \\
 &  &  CF. MSE   & \textbf{1.36$\pm$0.14} & 8.58$\pm$0.77 & 57.40$\pm$4.23 & 24.61$\pm$7.05 \\
\bottomrule
\end{tabular}
\caption{Mean$\pm$standard deviation of observational, interventional, and counterfactual queries of four different SCMs in nonlinear and nonadditive settings over $10$ random initializations of the model and training data. The values are multiplied by $100$ for clarity.}
    \label{tab: small scm results}
\end{center}
\end{small}
\end{table*}

\begin{figure*}[!ht]
    \begin{minipage}{.99\textwidth}
    \centering
    \includegraphics[width=0.77\columnwidth]{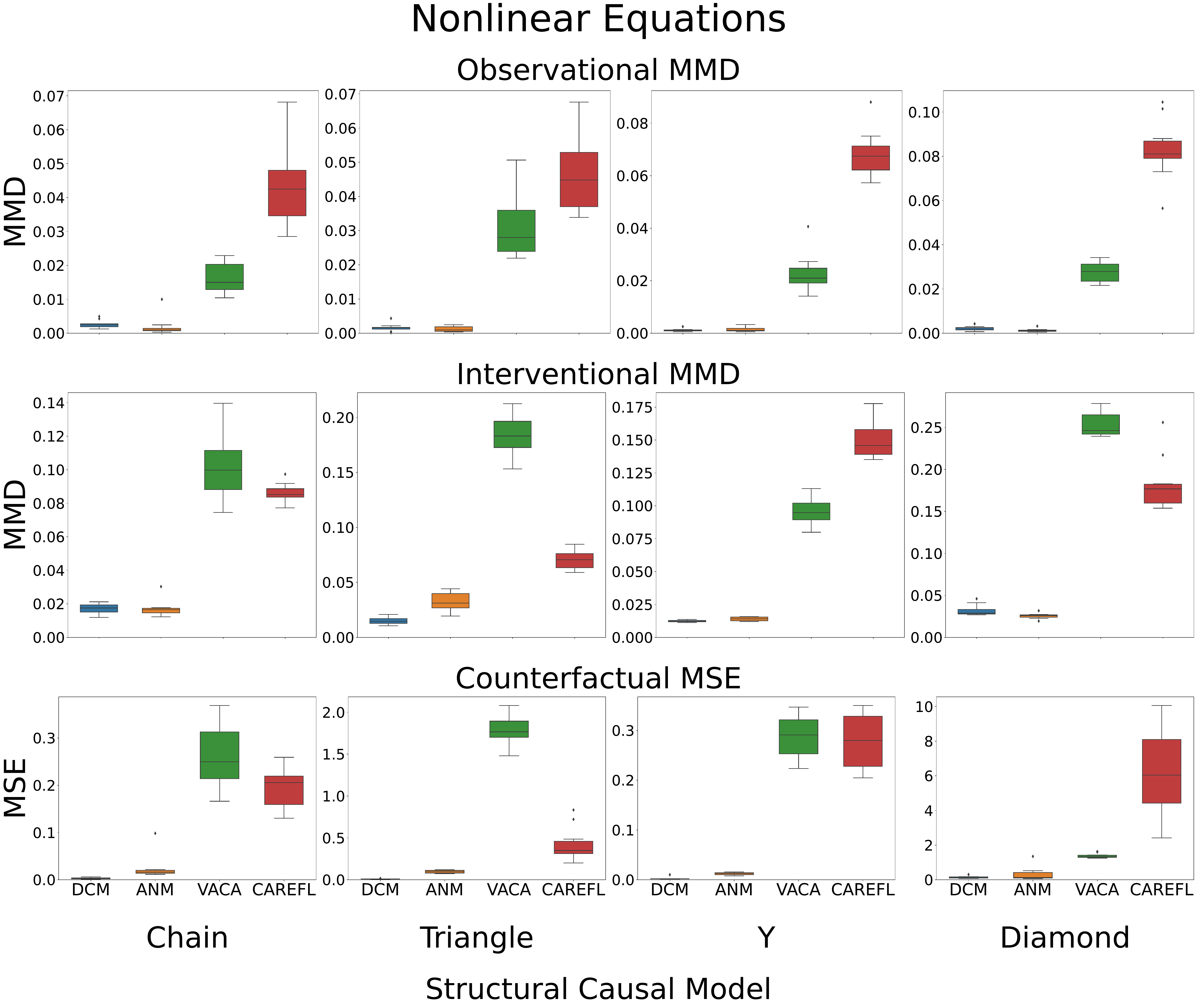}
    \vspace{0.3cm}
    \end{minipage}

    \begin{minipage}{.99\textwidth}
        \centering
    \includegraphics[width=0.77\columnwidth]{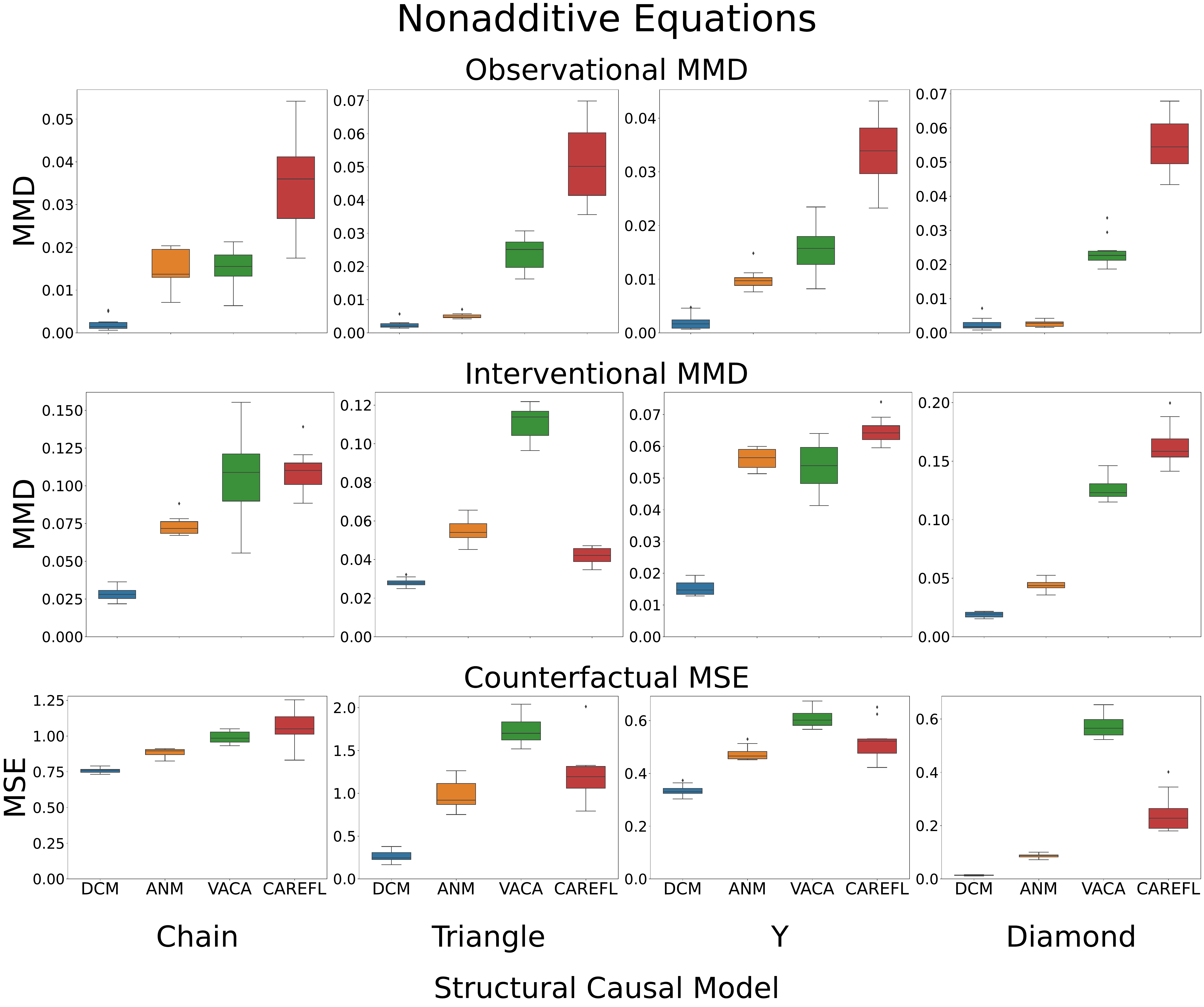}
    \end{minipage}
            \caption{Top: Nonlinear setting (NLIN), Bottom: Nonadditive setting (NADD). Box plots of observational, interventional, and counterfactual queries of four different SCMs over $10$ random initializations of the model and training data.}
        \label{fig: box plots small scm}
\end{figure*}

\begin{figure}[t!]
    \centering
    \begin{minipage}{0.95\textwidth}
    \centering
    \scalebox{1}{ 
\begin{tikzpicture}[every node/.style={inner sep=0,outer sep=0}]

        \node[state, fill=gray!60] (x1) at (0,0) {$X_1$};
        \node[state, fill=gray!60] (x2) [below = 1.20 cm of x1] {$X_2$};
        \node[state, fill=gray!60] (x3) [right = 1.20 cm of x1] {$X_3$};
        \node[state, fill=gray!60] (x4) [right = 1.20 cm of x2] {$X_4$};
        \node[state, fill=gray!60] (x5) [right = 1.20 cm of x3] {$X_5$};
        \node[state, fill=gray!60] (x6) [right = 1.20 cm of x4] {$X_6$};
        \node[state, fill=gray!60] (x7) [right = 1.20 cm of x5] {$X_7$};
        \node[state, fill=gray!60] (x8) [right = 1.20 cm of x6] {$X_8$};
        \node[state, fill=gray!60] (x9) [right = 1.20 cm of x7] {$X_9$};
        \node[state, fill=gray!60] (x10) [right = 1.20 cm of x8] {$X_{10}$};

        \node[state, fill=white!60] (u1) [above = 0.8 cm of x1] {$U_{1}$};
        \node[state, fill=white!60] (u2) [below = 0.8 cm of x2] {$U_{2}$};
        \node[state, fill=white!60] (u3) [above = 0.8 cm of x3] {$U_{3}$};
        \node[state, fill=white!60] (u4) [below = 0.8 cm of x4] {$U_{4}$};
        \node[state, fill=white!60] (u5) [above = 0.8 cm of x5] {$U_{5}$};
        \node[state, fill=white!60] (u6) [below = 0.8 cm of x6] {$U_{6}$};
        \node[state, fill=white!60] (u7) [above = 0.8 cm of x7] {$U_{7}$};
        \node[state, fill=white!60] (u8) [below = 0.8 cm of x8] {$U_{8}$};
        \node[state, fill=white!60] (u9) [above = 0.8 cm of x9] {$U_{9}$};
        \node[state, fill=white!60] (u10) [below = 0.8 cm of x10] {$U_{10}$};

        \path (x1) edge [thick](x2);
        \path (x1) edge [thick](x3);
        \path (x2) edge [thick](x4);
        \path (x3) edge [thick](x4);
        \path (x3) edge [thick](x5);
        \path (x4) edge [thick](x6);
        \path (x5) edge [thick](x6);
        \path (x5) edge [thick](x7);
        \path (x6) edge [thick](x8);
        \path (x7) edge [thick](x8);
        \path (x7) edge [thick](x9);
        \path (x8) edge [thick](x10);
        \path (x9) edge [thick](x10);

        \path (u1) edge [thick](x1);
        \path (u2) edge [thick](x2);
        \path (u3) edge [thick](x3);
        \path (u4) edge [thick](x4);
        \path (u5) edge [thick](x5);
        \path (u6) edge [thick](x6);
        \path (u7) edge [thick](x7);
        \path (u8) edge [thick](x8);
        \path (u9) edge [thick](x9);
        \path (u10) edge [thick](x10);
\end{tikzpicture}
}\caption{Ladder graph used in Section~\ref{sec: exp}.}

    \label{fig: ladder}
    \end{minipage} 
    \begin{minipage}{0.95\textwidth}
    \centering
    \scalebox{1}{ 
\begin{tikzpicture}[every node/.style={inner sep=0,outer sep=0}]
        \node[state, fill=gray!60] (x1) at (0,0) {$X_1$};
        \node[state, fill=gray!60] (x2) [right = 1.20 cm of x1] {$X_2$};
        \node[state, fill=gray!60] (x3) [right = 1.2 cm of x2] {$X_3$};
        \node[state, fill=gray!60] (x4) [below = 1.2 cm of x1] {$X_4$};
        \node[state, fill=gray!60] (x5) [below = 1.2 cm of x2] {$X_5$};
        \node[state, fill=gray!60] (x6) [below = 1.2 cm of x3] {$X_6$};
        \node[state, fill=gray!60] (x7) [right = 1.2 cm of x6] {$X_7$};
        \node[state, fill=gray!60] (x8) [below = 1.2 cm of x5] {$X_8$};
        \node[state, fill=gray!60] (x9) [below = 1.2 cm of x6] {$X_9$};
        \node[state, fill=gray!60] (x10) [below = 1.2 cm of x8] {$X_{10}$};
        
        \path (x1) edge [thick](x2);
        \path (x1) edge [thick](x4);
        \path (x2) edge [thick](x4);
        \path (x2) edge [thick](x5);
        \path (x2) edge [thick](x9);
        \path (x3) edge [thick](x6);
        \path (x3) edge [thick](x7);
        \path (x4) edge [thick](x8);
        \path (x4) edge [thick](x10);
        \path (x5) edge [thick](x8);
        \path (x7) edge [thick](x9);
        \path (x8) edge [thick](x10);
        \path (x9) edge [thick](x10);
\end{tikzpicture}
}\caption{Example of a random graph used in~\cref{sec: exp}, with exogenous noise nodes omitted for clarity.}
    \label{fig: random}
    \end{minipage} 
    \begin{minipage}{.23\textwidth}
    \scalebox{1}{ 
\begin{tikzpicture}[every node/.style={inner sep=0,outer sep=0}]
        \node[state, fill=gray!60] (x1) at (0,0) {$X_1$};
        \node[state, fill=gray!60] (x2) [below  = 0.60 cm of x1] {$X_2$};
        \node[state, fill=gray!60] (x3) [below = 0.60 cm of x2] {$X_3$};
        
        \node[state] (u1) [right  = 0.4cm of x1] {$U_1$};
        \node[state] (u2) [right = 0.4cm of x2] {$U_2$};
        \node[state] (u3) [right =  0.4cm of x3] {$U_3$};

        \path (x1) edge [thick](x2);
        \path (x2) edge [thick](x3);

        \path (u1) edge (x1);
        \path (u2) edge (x2);
        \path (u3) edge (x3);
\end{tikzpicture}}
    \caption{Chain graph.}
    \label{fig: chain}
    \end{minipage}
    \begin{minipage}{.23\textwidth}
    \scalebox{1}{ 
\begin{tikzpicture}[every node/.style={inner sep=0,outer sep=0}]

        \node[state, fill=gray!60] (x1) at (0,0) {$X_1$};
        \node[state, fill=gray!60] (x2) [below left = 0.60 cm of x1] {$X_2$};
        \node[state, fill=gray!60] (x3) [below right = 0.60 cm of x1] {$X_3$};
        
        \node[state] (u1) [above = 0.4cm of x1] {$U_1$};
        \node[state] (u2) [above = 0.4cm of x2] {$U_2$};
        \node[state] (u3) [above =  0.4cm of x3] {$U_3$};

        \path (x1) edge [thick](x2);
        \path (x1) edge [thick](x3);
        \path (x2) edge [thick](x3);
        
        \path (u1) edge (x1);
        \path (u2) edge (x2);
        \path (u3) edge (x3);
    \end{tikzpicture}
        }
        \caption{Triangle graph.}
        \label{fig: triangle}
     \end{minipage}
\begin{minipage}{.23\textwidth}
    \centering
    \scalebox{1}{ 
\begin{tikzpicture}[every node/.style={inner sep=0,outer sep=0}]

        \node[state, fill=gray!60] (x1) at (0,0) {$X_1$};
        \node[state, fill=gray!60] (x2) [below left = 0.60 cm of x1] {$X_2$};
        \node[state, fill=gray!60] (x3) [below right = 0.60 cm of x1] {$X_3$};
        \node[state, fill=gray!60] (x4) [below = 1.20 cm of x1] {$X_4$};
        
        \node[state] (u1) [above = 0.4cm of x1] {$U_1$};
        \node[state] (u2) [above = 0.4cm of x2] {$U_2$};
        \node[state] (u3) [above =  0.4cm of x3] {$U_3$};
        \node[state] (u4) [right =  0.4cm of x4] {$U_4$};

        \path (x1) edge [thick](x2);
        \path (x1) edge [thick](x3);
        \path (x2) edge [thick](x3);
        \path (x2) edge [thick](x4);
        \path (x3) edge [thick](x4);
        
        \path (u1) edge (x1);
        \path (u2) edge (x2);
        \path (u3) edge (x3);
        \path (u4) edge (x4);

\end{tikzpicture}
    }
    \caption{Diamond graph.}
    \label{fig: diamond}
\end{minipage}
\begin{minipage}{.23\textwidth}
    \centering
    \scalebox{1}{ 
\begin{tikzpicture}[every node/.style={inner sep=0,outer sep=0}]

        \node[state, fill=gray!60] (x3) at (0,0) {$X_3$};
        \node[state, fill=gray!60] (x1) [above left = 0.60 cm of x3] {$X_1$};
        \node[state, fill=gray!60] (x2) [above right = 0.60 cm of x3] {$X_2$};
        \node[state, fill=gray!60] (x4) [below = 0.6 cm of x3] {$X_4$};
        
        \node[state] (u1) [above = 0.4cm of x1] {$U_1$};
        \node[state] (u2) [above = 0.4cm of x2] {$U_2$};
        \node[state] (u3) [right =  0.4cm of x3] {$U_3$};
        \node[state] (u4) [right =  0.4cm of x4] {$U_4$};

        \path (x1) edge [thick](x3);
        \path (x2) edge [thick](x3);
        \path (x3) edge [thick](x4);
        
        \path (u1) edge (x1);
        \path (u2) edge (x2);
        \path (u3) edge (x3);
        \path (u4) edge (x4);

\end{tikzpicture}
    }
    \caption{Y graph.}
    \label{fig: Y}
\end{minipage}
\caption{Causal graphs used in our experiments in Appendix~\ref{app: more syn exp}.}\label{fig: causal graph diagrams}
\end{figure}


\section{Real Data Experiment II}
\red{
In order to evaluate our DCM approach, we assess its performance in computing the Individual Treatment Effect (ITE), which is defined as $D_i:=Y_i(1) - Y_i(0)$ where $Y_i(0) \in  R$ is the potential outcome of unit $i$ when $i$ is assigned to the control group, and $Y_i(1)$ is the potential outcome when $i$ is assigned to the treatment group. Note that we do not observe $D_i$ for any unit $i$  as for each unit in the training data set, we observe either
the outcome under control or the outcome under treatment,  but never both.}

\red{
Now, in non-simulated data, it is impossible to know the counterfactual ground-truth as this event did not happen. However, averaging the ITE over all individuals, i.e., $\E[Y(1)-Y(0)]$, provides us with an estimate of the average treatment effect (ATE), for which we have real-world datasets with known ground-truth. Therefore, to evaluate our approach, we first estimate the ITE for each sample by estimating each corresponding counterfactual outcome. We then average these to obtain the ATE, which we can then compare with the ground-truth ATE. Since our techniques are designed for constructing unit-level counterfactuals, and not ATE directly, the goal here is not to compare against other ATE estimation approaches, but rather demonstrate that the individual treatment effects computed through our counterfactuals are reasonably accurate.}

\red{The following experiments are based on popular datasets. Each experiment was repeated 10 times using the same model hyperparameters for each problem, without fine-tuning to the specific problem. For each dataset, we use the ground-truth graph structure from the literature and assign a DCM model to each non-root node. For root nodes, we can directly use the empirical distribution without the need to fit a particular model.}

\red{
\noindent\textbf{Infant Health and Development Program Dataset.}
The dataset aims at predicting the effect of specialized childcare on cognitive test scores of infants~\citep{hill2011bayesian}. The goal here is to see if a treatment (specialized childcare) improves the cognitive abilities of infants compared to infants that did not receive the treatment. The average treatment effect here is the difference between the expected cognitive test score under do(specialized child care = 1) and  expected cognitive test score under do(specialized child care = 0). The ground-truth ATE (on cognitive test score) associated with this dataset is $4.021$.}

\red{
\noindent\textbf{Lalonde Dataset.} The dataset contains different demographic variables (e.g., gender, age, education etc.) with the goal to see if training programs increase earnings~\citep{lalonde1986evaluating}. The average treatment effect here is the difference between the expected earnings under do(training program = 1) and expected earnings under do(training program = 0). 
The ground-truth ATE (on earnings) associated with this dataset is $1639.80$.}

\begin{table}
    \centering
    \begin{tabular}{cccc}
    \toprule
   Dataset & Algorithm &  ATE (computed from ITE) & Relative Absolute Error (\%) \\  \midrule
    IHDP &    DCM & 4.013 & 0.199\% \\
  IHDP  &      ANM &  3.957 & 1.592\%\\
    Lalonde &    DCM & 1579.59 & 3.672\% \\
    Lalonde  &      ANM &  1516.20 & 7.538\% \\
        \bottomrule
    \end{tabular}
    \caption{ATE estimation through computing unit-level counterfactuals. The ground-truth ATE provided with the IHDP and Lalonde datasets are $4.021$ and $1639.80$, respectively. }
    \label{tab:my_label}
\end{table}


\red{
\noindent\textbf{Results.} The results are summarized in Table~\ref{tab:my_label}. As noted above, the focus of this paper is on computing unit-level counterfactuals. The fact that the estimate of the ATE computed from via DCM approach matches the ground-truth ATE well can be seen as a sign that the unit-level counterfactuals were computed accurately. For baseline, we also show the results where we replace our DCM approach with additive noise model (ANM) approach for modeling the SCMs. 
}



    

\end{document}